  \providecommand\BibTeX{{%
    \normalfont B\kern-0.5em{\scshape i\kern-0.25em b}\kern-0.8em\TeX}}}
\newtheorem{theorem}{Theorem}
\newtheorem{lemma}{Lemma}
\newtheorem{assumption}{Assumption}
\newenvironment{myproof}[1]
{\par\noindent\textbf{Proof of #1.}\ \enspace\ignorespaces\begin{allowdisplaybreaks}}
	{\end{allowdisplaybreaks}\hspace{\stretch{1}}$\square$}
\renewcommand{\algorithmicrequire}{\textbf{Input:}} 
\begin{document}

\title{Bayesian Optimization using Pseudo-Points}

\author{Chao Qian}
\email{qianc@lamda.nju.edu.cn}
\affiliation{%
 \institution{Nanjing University}
 \city{Nanjing}
 \postcode{210023}
 \country{China}}

 \author{Hang Xiong}
\affiliation{%
 \institution{University of Science and Technology of China}
 \city{Hefei}
 \postcode{230027}
 \country{China}}

\author{Ke Xue}
\email{xuek@lamda.nju.edu.cn}
\affiliation{%
 \institution{Nanjing University}
 \city{Nanjing}
 \postcode{210023}
 \country{China}}


\begin{abstract}
Bayesian optimization (BO) is a popular approach for expensive black-box optimization, with applications including parameter tuning, experimental design, robotics. BO usually models the objective function by a Gaussian process (GP), and iteratively samples the next data point by maximizing an acquisition function. In this paper, we propose a new general framework for BO by generating pseudo-points (i.e., data points whose objective values are not evaluated) to improve the GP model. With the classic acquisition function, i.e., upper confidence bound (UCB), we prove that the cumulative regret can be generally upper bounded. Experiments using UCB and other acquisition functions, i.e., probability of improvement (PI) and expectation of improvement (EI), on synthetic as well as real-world problems clearly show the advantage of generating pseudo-points.
\end{abstract}

\begin{CCSXML}
<ccs2012>
<concept>
<concept_id>10010147.10010178.10010205.10010208</concept_id>
<concept_desc>Computing methodologies~Continuous space search</concept_desc>
<concept_significance>300</concept_significance>
</concept>
</ccs2012>
\end{CCSXML}

\ccsdesc[300]{Computing methodologies~Continuous space search}

\keywords{Bayesian optimization, Gaussian process, theoretical analysis, empirical study}

\maketitle

\section{Introduction}

In real-world applications, we often need to solve an optimization problem: $$\bm{x}^*\in \mathop{\arg\max}\nolimits_{\bm{x}\in \mathcal{X}}f(\bm{x}),$$ where $\mathcal{X} \subseteq \mathbb{R}^d$ is the solution space, $f:\mathcal{X} \rightarrow \mathbb{R}$ is the objective function, and $\bm{x}^*$ is a global optimal solution. Usually, it is assumed that $f$ has a known mathematical expression, is convex, or cheap to evaluate at least. Increasing evidences, however, show that $f$ may not satisfy these assumptions, but is an expensive black-box model~\cite{Brochu2010physical}. That is, $f$ can be non-convex, or even the closed-form expression of $f$ is unknown; meanwhile, evaluating $f$ can be noisy and computationally very expensive.

Expensive black-box optimization is involved in many real-world decision making problems. For example, in machine learning, one has to tune hyper-parameters to maximize the performance of a learning algorithm~\cite{Snoek2012PBO}; in physical experiments, one needs to set proper parameters of the experimental environment to obtain an ideal product~\cite{Brochu2010physical}. More applications can been found in robotic control~\cite{MartinezCantin07RSS}, computer vision~\cite{Deni2012cv}, sensor placing~\cite{Garnett2010sensor}, and analog circuit design~\cite{lyu18a}.

Bayesian optimization (BO)~\cite{Jonas1994ApplicationBO} has been a type of powerful algorithm to solve expensive black-box optimization problems. The main idea is to build a model, usually by a Gaussian process (GP), for the objective function $f$ based on the observation data, and then sample the next data point by maximizing an acquisition function. Many BO algorithms have been proposed, with the goal of reaching the global optima using as few objective evaluations as possible.

Most existing works focus on designing effective acquisition functions, e.g., probability of improvement (PI)~\cite{Krushner64PI}, expectation of improvement (EI)~\cite{Jones1998EI}, and upper confidence bound (UCB)~\cite{Srinivas2010GPUCB}. Recently, Wang et al.~\cite{Wang16EST} proposed the EST function by directly estimating $\bm{x}^*$, which automatically and adaptively trades off exploration and exploitation in PI and UCB. Another major type of acquisition functions is based on information entropy, including entropy search (ES)~\cite{Hennig12ES}, predictive ES~\cite{Hern14PES}, max-value ES~\cite{Wang17MES}, FITBO~\cite{xu2018fast}, etc. As BO is a sequential algorithm, some parallelization techniques have been introduced for acceleration, e.g.,~\cite{Azimi12SimPal,DesautelsKB14PUCB,Shah15Batch,Gonzal16LocalPal}. There is also a sequence of works addressing the difficulty of BO for high-dimensional optimization, e.g.,~\cite{Wang2014REMBO,Kirthevasan2015Additive,wang2017batched,Mutny18FQ}.


For any BO algorithm with a specific acquisition function, the GP model becomes increasingly accurate with the observation data augmenting. However, the number of data points to be evaluated is often limited due to the expensive objective evaluation. In this paper, we propose a general framework for BO by generating pseudo-points to improve the GP model. That is, before maximizing the acquisition function to select the next point in each iteration, some pseudo-points are generated and added to update the GP model. The pseudo-points are neighbors of the observed data points, and take the same function values as the observed ones. Without increasing the evaluation cost, the generation of pseudo-points can reduce the variance of the GP model, while introducing little accuracy loss under the Lipschitz assumption. This framework is briefly called BO-PP.

Theoretically, we study the performance of BO-PP w.r.t. the acquisition function UCB, called UCB-PP. We prove a general upper bound of UCB-PP on the cumulative regret, i.e., $\sum^T_{t=1} (f(\bm{x}^*) -f(\bm{x}_t))$, where $\bm{x}_t$ denotes the sampled point in the $t$-th iteration. It is shown to be a generalization of the known bound~\cite{Srinivas2010GPUCB} of UCB.

Empirically, we compare BO-PP with BO on synthetic benchmark functions as well as real-world optimization problems. The acquisition functions UCB, PI and EI are selected. The results clearly show the excellent performance of BO-PP. The superiority of UCB-PP over UCB verifies our theoretical analysis, and that of PI-PP over PI and EI-PP over EI shows the applicability of the proposed framework of generating pseudo-points.

We start the rest of the paper by introducing some background on BO. We then present in three subsequent sections the proposed framework BO-PP, theoretical analysis and empirical study, respectively. The final section concludes the paper.

\section{Background}

The general framework of BO is shown in Algorithm~\ref{alg:SBO}. It sequentially optimizes some given objective function $f(\bm{x})$ with assumptions on a prior distribution, i.e., a probabilistic model, over $f(\bm{x})$. In each iteration, BO selects a point $\bm{x}$ by maximizing an acquisition function $acq(\cdot)$, evaluates its objective value $f(\bm{x})$, and updates the prior distribution with the new data point.

\subsection{GPs}

A GP~\cite{Rasmussen2005GP} is commonly used as the prior distribution, which regards the $f$ value at each data point as a random variable, and assumes that all of them satisfy a joint Gaussian distribution specified by the mean value function $m(\cdot)$ and the covariance function $k(\cdot,\cdot)$. For convenience, $m(\cdot)$ is set to zero. Assume that the objective evaluation is subject to i.i.d. additive Gaussian noise, i.e., $y=f(\bm{x})+\epsilon$, where $\epsilon \sim \mathcal{N}(0,\sigma^2)$. Let $[t]$ denote the set $\{1,2,\ldots,t\}$.

Given an observation data $D_t=\{(\bm{x}_i,y_i)\}^t_{i=1}$, we can obtain the posterior mean
\begin{align}\label{eq-mean}
\mu_t(\bm{x})=\bm{k}_t(\bm{x})^\mathrm{T}(\mathbf{K}_{t}+\sigma^2\mathbf{I})^{-1}\bm{y}_{1:t}
\end{align} and the posterior variance
\begin{align}\label{eq-variance}
\sigma^2_t(\bm{x})=k(\bm{x},\bm{x})-\bm{k}_t(\bm{x})^\mathrm{T}(\mathbf{K}_{t}+\sigma^2\mathbf{I})^{-1}\bm{k}_t(\bm{x}),
\end{align} where $\bm{k}_t(\bm{x})=[k(\bm{x}_i,\bm{x})]_{i=1}^t$, $\mathbf{K}_{t}=[k(\bm{x}_i,\bm{x}_j)]_{i,j \in [t]}$ and $\bm{y}_{1:t}=[y_1;y_2;\ldots;y_t]$. For a GP, the log likelihood of observed data $D_t$ is $$\log \mathrm{Pr}(\bm{y}_{1:t}\mid \{\bm{x}_i\}^t_{i=1},\bm{\theta})=-\frac{1}{2}\bm{y}_{1:t}^\mathrm{T}(\mathbf{K}_t+\sigma ^2\mathbf{I})^{-1}\bm{y}_{1:t}-\frac{1}{2}\log \mathrm{det}(\mathbf{K}_t+\sigma ^2\mathbf{I})-\frac{t}{2}\log 2\pi,$$ where $\bm{\theta}$ denote the hyper-parameters of $k(\cdot,\cdot)$, and $\mathrm{det}(\cdot)$ denotes the determinant of a matrix. When updating the GP model in line~5 of Algorithm~\ref{alg:SBO}, the hyper-parameters $\bm{\theta}$ can be updated by maximizing the log likelihood of the augmented data, or treated to be fully Bayesian.

\begin{algorithm}[t]
	\caption{BO Framework}
	\label{alg:SBO}
	\begin{algorithmic}[1]
        \REQUIRE iteration budget $T$\\
	    \ENSURE
		\STATE let $D_0=\emptyset$;
		\FOR {$t=1:T$}
		\STATE $\bm{x}_{t}=\arg\max_{\bm{x} \in\mathcal{X}}acq(\bm{x})$;
		\STATE evaluate $f$ at $\bm{x}_{t}$ to obtain $y_t$;
		\STATE augment the data $D_{t}=D_{t-1} \cup \{(\bm{x}_{t},y_{t}) \}$ and update the GP model
		\ENDFOR
	\end{algorithmic}
\end{algorithm}

\subsection{Acquisition Functions}

The data point to be evaluated in each iteration is selected by maximizing an acquisition function, which needs to trade off exploration, i.e., large posterior variances, and exploitation, i.e., large posterior means. Many acquisition functions have been proposed, and we introduce three typical ones, i.e., PI~\cite{Krushner64PI}, EI~\cite{Jones1998EI} and UCB~\cite{Srinivas2010GPUCB}, which will be examined in this paper.

Let $\bm{x}^+$ be the best point generated in the first $(t-1)$ iterations, and $Z=(\mu_{t-1}(\bm{x})-f(\bm{x}^+))/\sigma_{t-1}(\bm{x})$. Let $\Phi$ and $\phi$ denote the cumulative distribution and probability density functions of standard Gaussian distribution, respectively. PI selects the point by maximizing the probability of improvement, i.e.,
\begin{align}\label{eq-PI}
\mathrm{PI}(\bm{x})=\mathrm{Pr}(f(\bm{x})>f(\bm{x}^+))=\Phi(Z).
\end{align}
EI selects the data point by maximizing the expectation of improvement, i.e.,
\begin{equation}\label{eq-EI}
\mathrm{EI}(\bm{x})=\left\{
\begin{aligned}
&(\mu_{t-1}(\bm{x})-f(\bm{x}^+))\Phi(Z)+\sigma_{t-1}(\bm{x})\phi(Z) &\text{if}\ \sigma_{t-1}(\bm{x})>0,\\
&0  &\text{if}\ \sigma_{t-1}(\bm{x})=0.
\end{aligned}
\right.
\end{equation}
UCB integrates the posterior mean and variance via a trade-off parameter $\beta_t$, i.e.,
\begin{align}\label{eq-UCB}
\mathrm{UCB}(\bm{x})=\mu_{t-1}(\bm{x})+\beta_t^{1/2}\sigma_{t-1}(\bm{x}),
\end{align}
and selects the data point by maximizing this measure.

\subsection{Regrets}

To evaluate the performance of BO algorithms, regrets are often used. The instantaneous regret $r_t=f(\bm{x}^*)-f(\bm{x}_t)$ measures the gap of function values between a global optimal solution $\bm{x}^*$ and the currently selected point $\bm{x}_t$. The simple regret $S_T=\min_{i\in[T]} r_i$ measures the gap between $\bm{x}^*$ and the best point found in the first $T$ iterations. The cumulative regret $R_T=\sum_{i=1}^{T}r_i$ is the sum of instantaneous regrets in the first $T$ iterations. It is clear that the simple regret $S_T$ is upper bounded by the average of the cumulative regret, i.e., $R_T/T$.

\section{The BO-PP Framework}

In BO, a GP is used to characterize the unknown objective function. The posterior variance of a GP describes the uncertainty about the unknown objective, while the posterior mean provides a closed form of the unknown objective. As the observation data augments, the posterior variance decreases and the posterior mean gets close to the unknown objective, making the GP express the unknown objective better. Thus, a straightforward way to improve the GP model is collecting more data points, which is, however, impractical, because the objective evaluation is expensive. In this section, we propose a general framework BO-PP by generating pseudo-points to improve the GP model.

As shown in Eq.~(\refeq{eq-variance}), the posterior variance of $f$ does not depend on the objective values, and will be decreased by adding new data points. As shown in Eq.~(\refeq{eq-mean}), the posterior mean of $f$ can be regarded as a linear combination of the observed objective values, and will be influenced by the error on the objective values of new data points. Inspired by the Lipschitz assumption, i.e., close data points have close objective values, the pseudo-points are selected to be neighbors of the observed data points, and take the same objective values as the observed ones.

The BO-PP framework is described in Algorithm~\ref{alg:GBO}. Before selecting the next data point in line~5, BO-PP generates a few pseudo-points to re-compute the posterior mean and variance of the GP model in lines~3-4, rather than directly using the GP model updated in the last iteration. After evaluating a new data point in line~6, the hyper-parameters of the covariance function employed by the GP model will be updated in line~7 using the truly observed data points by far. Note that the pseudo-points are only used to re-compute the posterior mean and variance.

\begin{algorithm}[t]
	\caption{BO-PP Framework}
	\label{alg:GBO}
	\begin{algorithmic}[1]
	\REQUIRE iteration budget $T$\\
    \renewcommand{\algorithmicrequire}{\textbf{Parameter:}}
	\REQUIRE $\{l_i\}^{T-1}_{i=0}$, $\{\tau_i\}^{T-1}_{i=0}$\\
	\ENSURE
		\STATE let $D_0=\emptyset$, $l_0=0$ and $\tau_0=0$;
		\FOR {$t=1:T$}
		\STATE generate $l_{t-1}$ pseudo-points $\{(\bm{x}'_i,\hat y'_i)\}^{l_{t-1}}_{i=1}$;
		\STATE re-compute $\hat{\mu}_{t-1}$ and $\hat{\sigma}_{t-1}$ by $D_{t-1} \cup \{(\bm{x}'_i,\hat y'_i)\}^{l_{t-1}}_{i=1}$;
		\STATE $\bm{x}_{t}=\arg\max_{\bm{x} \in\mathcal{X}}acq(\bm{x})$;
		\STATE evaluate $f$ at $\bm{x}_{t}$ to obtain $y_t$;
		\STATE augment the data $D_{t}=D_{t-1} \cup \{(\bm{x}_{t},y_{t}) \}$ and update the GP model
		\ENDFOR
    \renewcommand{\algorithmicrequire}{}
    \REQUIRE where each pseudo-point in the $t$-th iteration has distance $\tau_{t-1}$ to some observed data point in $D_{t-1}$, and takes the same objective value as the observed one.
	\end{algorithmic}
\end{algorithm}

The way of generating pseudo-points can be diverse, e.g., randomly sampling a point with distance $\tau$ from some observed data point.\footnote{Here, two data points $\bm{x},\bm{x}' \in \mathbb{R}^d$ have distance $\tau$ means that $\forall i \in [d]: |x_i-x'_i|=\tau$.} The only requirement is that the pseudo-point takes the same objective value as the corresponding observed data point, which does not increase the evaluation cost. The number $l_t$ of pseudo-points and the distance $\tau_t$ employed in each iteration could affect the performance of the algorithm. For example, as $\tau_t$ decreases, the error on the objective values of pseudo-points will decrease, whereas the reduction on the posterior variance will also decrease. Their relationship will be analyzed in the theoretical analysis. Note that BO-PP can be equipped with any acquisition function.

\section{Theoretical Analysis}\label{sec-theory}

In this section, we theoretically analyze the performance of BO-PP w.r.t. the acquisition function UCB, called UCB-PP. Specifically, we prove that the cumulative regret $R_T$ of UCB-PP can be generally upper bounded.

In the following analysis, let $\mu_t$ and $\sigma_t$ denote the posterior mean and variance after obtaining $D_t$; let $\hat{\mu}_t$ and $\hat{\sigma}_t$ denote the posterior mean and variance after adding pseudo-points $\{(\bm{x}'_i,\hat y'_i)\}^{l_t}_{i=1}$ into $D_t$; let $\tilde{\mu}_t$ and $\tilde{\sigma}_t$ denote the posterior mean and variance after adding pseudo-points with true observed values, i.e., $\{(\bm{x}'_i,y'_i))\}^{l_t}_{i=1}$, where $y'_i=f(\bm{x}'_i)+\epsilon'_i$ with $\epsilon'_i \sim \mathcal{N}(0,\sigma^2)$. Some notations about pseudo-points: $\hat{\bm{y}}'_{1:l_t}=[\hat y'_1;\hat y'_2;\ldots;\hat y'_{l_t}]$; $\bm{y}'_{1:l_t}=[y'_1;y'_2;\ldots;y'_{l_t}]$; $\bm{k}'_{l_t}(\bm{x})=[k(\bm{x}_i',\bm{x})]^{l_t}_{i=1}$; $\mathbf{K}'_{l_t}=[k(\bm{x}'_i,\bm{x}_j')]_{i,j\in[l_t]}$; $\widetilde{\mathbf{K}}_{t,l_t}=[k(\bm{x}_i,\bm{x}_j')]_{i\in[t],j\in[l_t]}$; $\bm{p}(\bm{x})=\widetilde{\mathbf{K}}_{t,l_t}^{\mathrm{T}}(\mathbf{K}_t+\sigma^2 \mathbf{I})^{-1}\bm{k}_t(\bm{x})-\bm{k}'_{l_t}(\bm{x})$; $\mathbf{M}=(\mathbf{K}'_{l_t}-\widetilde{\mathbf{K}}_{t,l_t}^{\mathrm{T}}(\mathbf{K}_t+\sigma^2 \mathbf{I})^{-1}\widetilde{\mathbf{K}}_{t,l_t}+\sigma^2\mathbf{I})^{-1}$.
For convenience of analysis, assume $k(\bm{x},\bm{x})=1$.


Let $A$ be a finite subset of $\mathcal{X}$, $\bm{f}_A$ denote their true objective values (which are actually random variables satisfying the posterior Gaussian distribution over the true objective values), and $\bm{y}_A$ denote the noisy observations. Let $PP$ denote all generated pseudo-points, and $\hat{\bm{y}}_{PP}$ denote their selected objective values. Note that $\hat{\bm{y}}_{PP}$ are random variables, as they are actually the noisy observation of the objective values of $PP$'s neighbor observed points. Let $\gamma'_T=\max_{A: |A|=T} I(\bm{y}_A;\bm{f}_A)-\min_{A: |A|=T, PP} I(\bm{y}_A;\hat{\bm{y}}_{PP})$, where $I(\cdot;\cdot)$ denotes the mutual information.

Theorem~\ref{theo:general} gives an upper bound of UCB-PP on the cumulative regret $R_T$. As the analysis of UCB in~\cite{Srinivas2010GPUCB}, Assumption~\ref{assump:lip} is required, implying
\begin{align}\label{eq-assumption} \mathrm{Pr}(\forall \bm{x},\bm{x}': |f(\bm{x})-f(\bm{x}')|\le L\|\bm{x}-\bm{x}'\|_1)\geq 1-dae^{-(L/b)^2}.\end{align}

\begin{assumption}\label{assump:lip}

Suppose the kernel $k(\cdot,\cdot)$ satisfies the following high probability bound on the derivatives of $f$: for some constants $a,b>0$, $\forall j \in [d]: \mathrm{Pr}(sup_{\bm{x}\in \mathcal{X}}|\partial f/ \partial x_j|>L) \le ae^{-(L/b)^2}$.
\end{assumption}

\begin{theorem}\label{theo:general}
Let $\mathcal{X} \subset [0,r]^d$, $\delta\in (0,1)$, and set $\beta_t$ in Eq.~(\refeq{eq-UCB}) as $\beta_t$$=2\log(2\pi^2t^2/(3\delta))+2d \log(t^2 dbr\sqrt{\log(4da/\delta)})$. Running UCB-PP for $T$ iterations, it holds that
\begin{equation}
\begin{aligned}\label{eq-mid-5}
\mathrm{Pr}\Big(R_T\le\sqrt{C T \beta_T\gamma_T'}+2+2\sum\nolimits_{t=1}^{T}\Delta_m(l_{t-1},\tau_{t-1})\Big)\ge 1-\delta,
\end{aligned}
\end{equation}
where $C=8/\log(1+\sigma^{-2})$, and $\Delta_m(l_t,\tau_t)=l_t^2\sqrt{1+\sigma^{-2}} \Big(bd\tau_t\sqrt{\log(4da/\delta)}/\sigma+ 2 \sqrt{\log \frac{4\sum\nolimits^{T-1}_{t=0} l_t}{\delta}}\Big)$.
\end{theorem}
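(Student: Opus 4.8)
The plan is to mirror the classical GP-UCB regret proof from Srinivas et al., but carefully track the two sources of error that pseudo-points introduce: (i) the mean shift $\hat\mu_{t-1} - \mu_{t-1}$ caused by assigning pseudo-points the wrong objective values, and (ii) the change in the information gain quantity that controls $\sum_t \hat\sigma_{t-1}^2(\bm x_t)$. The skeleton is: first establish a high-probability confidence band of the form $|f(\bm x) - \hat\mu_{t-1}(\bm x)| \le \beta_t^{1/2}\hat\sigma_{t-1}(\bm x) + (\text{pseudo-point error at iteration } t)$ uniformly over a discretization of $\mathcal X$; then use the UCB selection rule to bound the instantaneous regret $r_t$ by twice the confidence width plus twice the pseudo-point error; then sum over $t$, applying Cauchy–Schwarz to $\sum_t \hat\sigma_{t-1}(\bm x_t)$ and bounding $\sum_t \hat\sigma_{t-1}^2(\bm x_t)$ by $C\beta_T\gamma_T'/8$ via the information-gain argument; and finally absorb the discretization error into the additive constant $2$.

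\textbf{Key steps, in order.} First I would compare $\hat\mu_{t-1}$ with $\tilde\mu_{t-1}$ (the posterior mean one would get using the \emph{true} noisy values $y_i'$ at the pseudo-points). Using the block-matrix inversion formulas and the notation $\bm p(\bm x)$, $\mathbf M$ already set up in the excerpt, $\hat\mu_{t-1}(\bm x) - \tilde\mu_{t-1}(\bm x) = -\bm p(\bm x)^{\mathrm T}\mathbf M(\hat{\bm y}'_{1:l_{t-1}} - \bm y'_{1:l_{t-1}})$, and each coordinate of $\hat{\bm y}' - \bm y'$ equals $f(\bm x_{\text{source}}) - f(\bm x_i') + (\epsilon_{\text{source}} - \epsilon_i')$. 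Under the Lipschitz event from Eq.~\eqref{eq-assumption}, $|f(\bm x_{\text{source}}) - f(\bm x_i')| \le L\|\bm x_{\text{source}} - \bm x_i'\|_1 \le Ld\tau_{t-1}$ with $L = b\sqrt{\log(4da/\delta)}$ (the choice matching $\beta_t$), and the noise differences are bounded with probability $1-\delta'$ by a union bound over all $\sum_t l_t$ pseudo-points, each term $\le 2\sqrt{\log(4\sum_t l_t/\delta)}$ — this is exactly where the two summands in $\Delta_m$ come from. I would then bound $\|\bm p(\bm x)^{\mathrm T}\mathbf M\|$ crudely by $l_{t-1}\sqrt{1+\sigma^{-2}}$-type factors (using $k\le 1$, $\mathbf M \preceq \sigma^{-2}\mathbf I$, and counting entries), which produces the $l_{t-1}^2\sqrt{1+\sigma^{-2}}$ prefactor. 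Next, since $\tilde\mu_{t-1}, \tilde\sigma_{t-1}$ is a legitimate GP posterior from real (if fictitious) observations, the standard Gaussian tail bound gives $|f(\bm x) - \tilde\mu_{t-1}(\bm x)| \le \beta_t^{1/2}\tilde\sigma_{t-1}(\bm x)$ on a discretization, and one checks $\hat\sigma_{t-1} = \tilde\sigma_{t-1}$ (variance is value-independent), so the confidence band transfers to $\hat\mu_{t-1}$ up to the $\Delta_m$ additive term. Then the UCB argument: $r_t = f(\bm x^*) - f(\bm x_t) \le \mathrm{UCB}(\bm x^*) - f(\bm x_t) + (\text{disc. error}) \le \mathrm{UCB}(\bm x_t) - f(\bm x_t) + \ldots \le 2\beta_t^{1/2}\hat\sigma_{t-1}(\bm x_t) + 2\Delta_m(l_{t-1},\tau_{t-1}) + (\text{disc.})$. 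Summing, the discretization errors telescope to a constant $\le 2$ (via the $\pi^2/6$ series in $\beta_t$), $\sum_t 2\beta_t^{1/2}\hat\sigma_{t-1}(\bm x_t) \le \sqrt{4\beta_T T \sum_t \hat\sigma_{t-1}^2(\bm x_t)} \le \sqrt{CT\beta_T\gamma_T'}$ after bounding $\sum_t \hat\sigma_{t-1}^2(\bm x_t) \le \frac{2}{\log(1+\sigma^{-2})}\sum_t \log(1+\sigma^{-2}\hat\sigma_{t-1}^2(\bm x_t))$ and relating the log-sum to $\gamma_T'$.

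\textbf{The main obstacle} will be the last link: showing $\sum_{t=1}^T \log(1+\sigma^{-2}\hat\sigma_{t-1}^2(\bm x_t)) \le 2\gamma_T'$ with $\gamma_T'$ as defined via the mutual-information difference. In the vanilla GP-UCB proof this sum is \emph{exactly} $2 I(\bm y_{A_T}; \bm f_{A_T})$ for the set $A_T = \{\bm x_1,\dots,\bm x_T\}$ by the chain rule for mutual information and the Gaussian conditional-entropy identity. Here the posterior variances $\hat\sigma_{t-1}^2$ are computed on an \emph{augmented} set containing pseudo-points whose selected labels $\hat{\bm y}_{PP}$ are themselves noisy observations (of neighbor points), so the relevant quantity is $I(\bm y_{A_T}; \hat{\bm y}_{PP})$-flavored, and one must argue that conditioning on more (pseudo) observations only shrinks variance, giving $\sum_t \log(1+\sigma^{-2}\hat\sigma_{t-1}^2(\bm x_t)) \le 2\big(\max_{|A|=T} I(\bm y_A;\bm f_A) - \min_{|A|=T,PP} I(\bm y_A;\hat{\bm y}_{PP})\big) = 2\gamma_T'$. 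I would handle this by writing the telescoped log-sum as a difference of two information gains — the full-information term over the real points and the "explained-away" term due to the pseudo-points — and bounding each by the extremal definitions. Verifying the matrix-norm estimate for $\bm p^{\mathrm T}\mathbf M$ that yields precisely the $l_t^2\sqrt{1+\sigma^{-2}}$ factor is the other fiddly piece, but it is routine linear algebra given $k(\bm x,\bm x)=1$ and $\mathbf M \preceq \sigma^{-2}\mathbf I$; the conceptual difficulty is concentrated in the information-gain bookkeeping.
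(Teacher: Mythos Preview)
Your proposal is correct and follows essentially the same route as the paper's proof: bound $|\hat\mu_{t-1}-\tilde\mu_{t-1}|$ via the block-matrix identity and the Lipschitz/noise control (the paper's Lemma~3), transfer the Srinivas et al.\ confidence band from $\tilde\mu_{t-1}$ to $\hat\mu_{t-1}$ using $\hat\sigma=\tilde\sigma$, run the UCB chain to get $r_t\le 2\beta_t^{1/2}\hat\sigma_{t-1}(\bm x_t)+2\Delta_m+1/t^2$, apply Cauchy--Schwarz, and identify $\tfrac12\sum_t\log(1+\sigma^{-2}\hat\sigma_{t-1}^2(\bm x_t))=I(\bm y_{1:T};\bm f_{1:T})-I(\bm y_{1:T};\hat{\bm y}_{PP})\le\gamma_T'$ via the entropy chain rule. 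The only minor deviation is that the paper obtains the entrywise bound $|\mathbf M_{j,i}|\le\sigma^{-2}$ via Cauchy interlacing on the principal submatrices rather than from the spectral bound $\mathbf M\preceq\sigma^{-2}\mathbf I$ you mention (the spectral bound alone does not control off-diagonal entries), but you already flagged this as routine and it does not affect the argument's structure.
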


Before giving the proof of Theorem~\ref{theo:general}, we first give some lemmas that will be used. Lemma~\ref{lemma:variance} gives the reduction on the posterior variance by adding pseudo-points.

\begin{lemma}\label{lemma:variance}
After obtaining $D_t$ in UCB-PP, the reduction on the posterior variance by adding pseudo-points $\{(\bm{x}'_i,\hat y'_i)\}^{l_t}_{i=1}$ is \begin{align}
\Delta_v(\bm{x},l_t,\tau_t)=\sigma_t^2(\bm{x})-\hat{\sigma}^2_{t}(\bm{x})=\bm{p}(\bm{x})^{\mathrm{T}}\mathbf{M}\bm{p}(\bm{x}).
\end{align}
\end{lemma}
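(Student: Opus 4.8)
The plan is to reduce the claim to a single block-matrix identity. Adding the $l_t$ pseudo-points to $D_t$ replaces the Gram matrix $\mathbf{K}_t+\sigma^2\mathbf{I}$ by the $(t+l_t)\times(t+l_t)$ block matrix
\[
\mathbf{G}=\begin{pmatrix}\mathbf{K}_t+\sigma^2\mathbf{I} & \widetilde{\mathbf{K}}_{t,l_t}\\ \widetilde{\mathbf{K}}_{t,l_t}^{\mathrm{T}} & \mathbf{K}'_{l_t}+\sigma^2\mathbf{I}\end{pmatrix},
\]
and replaces $\bm{k}_t(\bm{x})$ by $(\bm{k}_t(\bm{x});\bm{k}'_{l_t}(\bm{x}))$. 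By Eq.~(\ref{eq-variance}) applied to the augmented data, $\hat{\sigma}^2_t(\bm{x})=k(\bm{x},\bm{x})-(\bm{k}_t(\bm{x});\bm{k}'_{l_t}(\bm{x}))^{\mathrm{T}}\mathbf{G}^{-1}(\bm{k}_t(\bm{x});\bm{k}'_{l_t}(\bm{x}))$; in particular this confirms that the pseudo-objective values $\hat{\bm{y}}'_{1:l_t}$ play no role in the variance, so only the distance parameter $\tau_t$ (through $\widetilde{\mathbf{K}}_{t,l_t}$ and $\mathbf{K}'_{l_t}$) enters.

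Next I would invert $\mathbf{G}$ by the Schur-complement formula with respect to the top-left block $\mathbf{A}:=\mathbf{K}_t+\sigma^2\mathbf{I}$. Writing $\mathbf{B}:=\widetilde{\mathbf{K}}_{t,l_t}$ and $\mathbf{S}:=\mathbf{K}'_{l_t}+\sigma^2\mathbf{I}-\mathbf{B}^{\mathrm{T}}\mathbf{A}^{-1}\mathbf{B}$, one has $\mathbf{S}^{-1}=\mathbf{M}$, and
\[
\mathbf{G}^{-1}=\begin{pmatrix}\mathbf{A}^{-1}+\mathbf{A}^{-1}\mathbf{B}\mathbf{M}\mathbf{B}^{\mathrm{T}}\mathbf{A}^{-1} & -\mathbf{A}^{-1}\mathbf{B}\mathbf{M}\\ -\mathbf{M}\mathbf{B}^{\mathrm{T}}\mathbf{A}^{-1} & \mathbf{M}\end{pmatrix}.
\]
Here $\mathbf{S}$ is invertible (indeed positive definite) because $\mathbf{G}$ is a sum of a positive-semidefinite covariance matrix and $\sigma^2\mathbf{I}$, hence positive definite, and Schur complements of positive-definite matrices are positive definite. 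Substituting this expression, the quadratic form $(\bm{k}_t;\bm{k}'_{l_t})^{\mathrm{T}}\mathbf{G}^{-1}(\bm{k}_t;\bm{k}'_{l_t})$ expands into $\bm{k}_t(\bm{x})^{\mathrm{T}}\mathbf{A}^{-1}\bm{k}_t(\bm{x})$ plus cross terms that regroup exactly as $(\mathbf{B}^{\mathrm{T}}\mathbf{A}^{-1}\bm{k}_t(\bm{x})-\bm{k}'_{l_t}(\bm{x}))^{\mathrm{T}}\mathbf{M}(\mathbf{B}^{\mathrm{T}}\mathbf{A}^{-1}\bm{k}_t(\bm{x})-\bm{k}'_{l_t}(\bm{x}))=\bm{p}(\bm{x})^{\mathrm{T}}\mathbf{M}\bm{p}(\bm{x})$, by the definition of $\bm{p}(\bm{x})$.

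Combining, $\hat{\sigma}^2_t(\bm{x})=k(\bm{x},\bm{x})-\bm{k}_t(\bm{x})^{\mathrm{T}}(\mathbf{K}_t+\sigma^2\mathbf{I})^{-1}\bm{k}_t(\bm{x})-\bm{p}(\bm{x})^{\mathrm{T}}\mathbf{M}\bm{p}(\bm{x})=\sigma^2_t(\bm{x})-\bm{p}(\bm{x})^{\mathrm{T}}\mathbf{M}\bm{p}(\bm{x})$, using Eq.~(\ref{eq-variance}) for $\sigma^2_t(\bm{x})$, which yields the claimed identity. The only real care needed is the bookkeeping in the block-inversion expansion (making sure the two asymmetric cross terms and the extra $\mathbf{A}^{-1}\mathbf{B}\mathbf{M}\mathbf{B}^{\mathrm{T}}\mathbf{A}^{-1}$ term assemble into the single completed square $\bm{p}(\bm{x})^{\mathrm{T}}\mathbf{M}\bm{p}(\bm{x})$), together with the observation that $\mathbf{S}$ is nonsingular so that $\mathbf{M}$ is well defined; everything else is a direct substitution.
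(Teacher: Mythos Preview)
Your proposal is correct and follows essentially the same approach as the paper: both express $\hat{\sigma}_t^2(\bm{x})$ via Eq.~(\ref{eq-variance}) applied to the augmented data set, invert the resulting $2\times 2$ block Gram matrix by the Schur-complement (block-inverse) formula, and regroup the quadratic form to isolate $\bm{p}(\bm{x})^{\mathrm{T}}\mathbf{M}\bm{p}(\bm{x})$. Your write-up is in fact more explicit than the paper's (which simply cites ``the inverse of a block matrix''), and your justification that $\mathbf{S}$ is positive definite so $\mathbf{M}$ is well defined is a detail the paper omits.
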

\begin{proof}
	By Eq.~(\refeq{eq-variance}) and $k(\bm{x},\bm{x})=1$, we have
	\begin{equation}
	\begin{aligned}
	\hat{\sigma}_{t}^2(x)&=1 - [\bm{k}_t(\bm{x});\bm{k}'_{l_t}(\bm{x})]^{\mathrm{T}}
	\left( \left[
	\begin{matrix}
	\mathbf{K}_t & \widetilde{\mathbf{K}}_{t,l_t}\\
	\widetilde{\mathbf{K}}_{t,l_t}^{\mathrm{T}} &\  \mathbf{K}'_{l_t}
	\end{matrix}
	\right]+\sigma^2\mathbf{I}\right)^{-1}[\bm{k}_t(\bm{x});\bm{k}'_{l_t}(\bm{x})]\\
	&=1-\bm{k}_t(\bm{x})^{\mathrm{T}}(\mathbf{K}_t+\sigma^2\mathbf{I})^{-1}\bm{k}_t(\bm{x})-\bm{p}(\bm{x})^{\mathrm{T}}\mathbf{M}\bm{p}(\bm{x})\\
	&=\sigma_t^2(\bm{x})-\bm{p}(\bm{x})^{\mathrm{T}}\mathbf{M}\bm{p}(\bm{x}),
	\end{aligned}
	\end{equation}
where the second equality is derived by the inverse of a block matrix, and the third one holds by Eq.~(\refeq{eq-variance}). Thus, the lemma holds.
\end{proof}

Lemma~\ref{lemma:ineq} is extracted from Lemma~5.1 in~\cite{Srinivas2010GPUCB}, and will be used in the proof of Lemma~\ref{lemma:mean}.
\begin{lemma}\label{lemma:ineq}
Suppose $X$ is a random variable satisfying the Gaussian distribution $\mathcal{N}(\mu,\sigma^2)$. Then, it holds that
\begin{equation}
\mathrm{Pr}(|X-\mu|>c\sigma)\le e^{-c^2/2},
\end{equation}
where $c>0$.
\end{lemma}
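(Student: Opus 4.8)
The plan is to reduce the two-sided statement to a one-sided Gaussian tail bound and then establish the sharp constant by an elementary integral estimate. First I would normalize: setting $Z=(X-\mu)/\sigma$, we have $Z\sim\mathcal{N}(0,1)$, so $\mathrm{Pr}(|X-\mu|>c\sigma)=\mathrm{Pr}(|Z|>c)$, and by the symmetry of the standard Gaussian density this equals $2\,\mathrm{Pr}(Z>c)=2(1-\Phi(c))$.

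It then suffices to show $1-\Phi(c)\le\tfrac{1}{2}e^{-c^2/2}$ for all $c>0$. I would do this directly from the definition: writing $1-\Phi(c)=\frac{1}{\sqrt{2\pi}}\int_c^\infty e^{-t^2/2}\,dt$ and substituting $t=c+u$ gives $\frac{1}{\sqrt{2\pi}}\int_0^\infty e^{-(c+u)^2/2}\,du=\frac{e^{-c^2/2}}{\sqrt{2\pi}}\int_0^\infty e^{-cu}e^{-u^2/2}\,du$. Since $c,u>0$, the factor $e^{-cu}\le 1$, so the integral is at most $\int_0^\infty e^{-u^2/2}\,du=\sqrt{\pi/2}$, which yields $1-\Phi(c)\le\frac{e^{-c^2/2}}{\sqrt{2\pi}}\cdot\sqrt{\pi/2}=\tfrac{1}{2}e^{-c^2/2}$. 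Multiplying by $2$ gives the claim.

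There is essentially no obstacle here: the lemma is the classical sub-Gaussian tail bound (and is quoted from Lemma~5.1 of~\cite{Srinivas2010GPUCB}). The only point requiring a little care is the constant — a crude Chernoff or union-bound argument would give $\mathrm{Pr}(|Z|>c)\le 2e^{-c^2/2}$, which is a factor of $2$ too weak, so one must use the refined estimate $1-\Phi(c)\le\tfrac{1}{2}e^{-c^2/2}$ above rather than the looser $1-\Phi(c)\le e^{-c^2/2}$. Alternatively one could simply invoke the standard Gaussian tail inequality as a known fact, but the short self-contained computation above makes the bound explicit.
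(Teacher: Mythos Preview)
Your argument is correct: the normalization, the symmetry reduction to $2(1-\Phi(c))$, and the substitution $t=c+u$ followed by bounding $e^{-cu}\le 1$ all go through, yielding exactly $1-\Phi(c)\le\tfrac{1}{2}e^{-c^2/2}$ and hence the stated two-sided bound.

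The paper, however, does not prove this lemma at all; it simply states that the result is extracted from Lemma~5.1 of~\cite{Srinivas2010GPUCB} and moves on. So your self-contained derivation goes beyond what the paper does. Your remark about the constant is also apt: the naive Chernoff/union-bound route gives $2e^{-c^2/2}$, and the sharper $\tfrac{1}{2}$ factor on the one-sided tail is precisely what is needed to match the stated inequality. Nothing further is required.
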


The error on the posterior mean led by the incorrect objective values of pseudo-points can be bounded as follows.

\begin{lemma}\label{lemma:mean}	
After obtaining $D_t$ in UCB-PP, the difference on the posterior mean by adding pseudo-points, i.e., $\{(\bm{x}'_i,\hat y'_i)\}^{l_t}_{i=1}$, and that with true observed values, i.e., $\{(\bm{x}'_i,y'_i))\}^{l_t}_{i=1}$, is $\hat{\mu}_t(\bm{x})-\tilde{\mu}_t(\bm{x})=-\bm{p}(\bm{x})^{\mathrm{T}}\mathbf{M}(\hat {\bm{y}}'_{1:l_t}-\bm{y}'_{1:l_t})$. Furthermore, it holds that
\begin{equation}
\mathrm{Pr}\left( \forall 0 \leq t \leq T-1,\forall \bm{x}\in\mathcal{X}: |\hat{\mu}_t(\bm{x})-\tilde{\mu}_t(\bm{x})| \leq \Delta_m(L,l_t,\tau_t) \right)\ge1-dae^{-(L/b)^2}-\delta/4,
\end{equation}
where $\Delta_m(L,l_t,\tau_t)= l_t^2 \sqrt{1+\sigma^{-2}}\Big(Ld\tau_t/\sigma+ 2 \sqrt{\log \frac{4\sum\nolimits^{T-1}_{t=0} l_t}{\delta}}\Big)$.
\end{lemma}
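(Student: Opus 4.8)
The plan is to establish the lemma in two stages: first derive the exact algebraic identity for $\hat{\mu}_t(\bm{x})-\tilde{\mu}_t(\bm{x})$ via the block-matrix inverse, then bound each factor in that identity with high probability. For the identity, I would write both $\hat{\mu}_t$ and $\tilde{\mu}_t$ using the formula in Eq.~(\refeq{eq-mean}) applied to the augmented data set $D_t \cup PP$, whose kernel matrix is the same block matrix $\bigl[\begin{smallmatrix}\mathbf{K}_t & \widetilde{\mathbf{K}}_{t,l_t}\\ \widetilde{\mathbf{K}}_{t,l_t}^{\mathrm{T}} & \mathbf{K}'_{l_t}\end{smallmatrix}\bigr]+\sigma^2\mathbf{I}$ that already appeared in the proof of Lemma~\ref{lemma:variance}. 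Inverting this block matrix (exactly as was done there for the variance) and noting that $\hat{\mu}_t$ and $\tilde{\mu}_t$ use the \emph{same} kernel matrices and differ only in the observation vector — $[\bm{y}_{1:t};\hat{\bm{y}}'_{1:l_t}]$ versus $[\bm{y}_{1:t};\bm{y}'_{1:l_t}]$ — the $\bm{y}_{1:t}$-dependent terms cancel in the difference, and the surviving term is precisely $-\bm{p}(\bm{x})^{\mathrm{T}}\mathbf{M}(\hat{\bm{y}}'_{1:l_t}-\bm{y}'_{1:l_t})$, using the Schur-complement structure ($\mathbf{M}$ is the inverse Schur complement, $\bm{p}(\bm{x})$ is the off-diagonal coupling acting on $\bm{k}_t(\bm{x})$ minus $\bm{k}'_{l_t}(\bm{x})$). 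This step is mechanical given the block-inverse computation already in the paper.

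Next I would bound $|\hat{\mu}_t(\bm{x})-\tilde{\mu}_t(\bm{x})|$ by $\|\bm{p}(\bm{x})\|\cdot\|\mathbf{M}\|\cdot\|\hat{\bm{y}}'_{1:l_t}-\bm{y}'_{1:l_t}\|$ (operator/Euclidean norms). For $\|\mathbf{M}\|$: since $\mathbf{K}'_{l_t}-\widetilde{\mathbf{K}}_{t,l_t}^{\mathrm{T}}(\mathbf{K}_t+\sigma^2\mathbf{I})^{-1}\widetilde{\mathbf{K}}_{t,l_t}$ is positive semidefinite (a Schur complement of a PSD matrix), $\mathbf{M}^{-1}\succeq \sigma^2\mathbf{I}$, so $\|\mathbf{M}\|\le\sigma^{-2}$. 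For $\|\bm{p}(\bm{x})\|$: each entry of $\bm{k}_t(\bm{x})$, $\bm{k}'_{l_t}(\bm{x})$ and of $\widetilde{\mathbf{K}}_{t,l_t}$ lies in $[-1,1]$ by $k(\bm{x},\bm{x})=1$ and Cauchy–Schwarz for kernels, and $\|(\mathbf{K}_t+\sigma^2\mathbf{I})^{-1}\|\le\sigma^{-2}$, giving a crude bound on each coordinate of $\bm{p}(\bm{x})$ of order $l_t\sigma^{-2}+1$, hence $\|\bm{p}(\bm{x})\|=O(l_t^{3/2}(1+\sigma^{-2}))$; I will need to track constants carefully so the final form matches $\Delta_m(L,l_t,\tau_t)$, which carries the factor $l_t^2\sqrt{1+\sigma^{-2}}$ (so the intended bound presumably uses a combination like $l_t\cdot l_t^{1/2}\cdot(\text{something})$ — I would reconcile the exact exponent bookkeeping with the stated $\Delta_m$, possibly absorbing $\sigma^{-1}$ factors into the two additive terms as written).

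The more delicate step is bounding $\|\hat{\bm{y}}'_{1:l_t}-\bm{y}'_{1:l_t}\|$. Write $\hat y'_i - y'_i = (f(\bm{x}_{\pi(i)})+\hat\epsilon_i) - (f(\bm{x}'_i)+\epsilon'_i)$, where $\bm{x}_{\pi(i)}\in D_t$ is the observed neighbor at distance $\tau_t$ from the pseudo-point $\bm{x}'_i$ (here $\hat\epsilon_i$ is the noise already in the reused observation $y_{\pi(i)}$). Under the Lipschitz event of Assumption~\ref{assump:lip} — which holds with probability $\ge 1-dae^{-(L/b)^2}$ by Eq.~(\refeq{eq-assumption}) — we get $|f(\bm{x}_{\pi(i)})-f(\bm{x}'_i)|\le L\|\bm{x}_{\pi(i)}-\bm{x}'_i\|_1 = Ld\tau_t$. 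For the noise part, I apply Lemma~\ref{lemma:ineq} to each of the at most $\sum_{t=0}^{T-1}l_t$ independent Gaussian noise variables with $c=\sqrt{2\log(4\sum_{t=0}^{T-1}l_t/\delta)}$ and union-bound, so that with probability $\ge 1-\delta/4$ every relevant $|\epsilon|\le\sigma\sqrt{2\log(4\sum l_t/\delta)}$; hence each $|\hat y'_i - y'_i|\le Ld\tau_t + 2\sigma\sqrt{2\log(4\sum l_t/\delta)}$ and $\|\hat{\bm{y}}'_{1:l_t}-\bm{y}'_{1:l_t}\|\le \sqrt{l_t}\bigl(Ld\tau_t + 2\sigma\sqrt{2\log(4\sum l_t/\delta)}\bigr)$. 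Combining the three bounds, dividing the $\sigma$ out of the $\|\mathbf{M}\|$ factor into this expression to produce the $Ld\tau_t/\sigma$ and $\sqrt{\log(\cdot)}$ terms inside $\Delta_m$, and taking a union bound over $t$ (already folded into the "$\forall 0\le t\le T-1$" quantifier and the $\sum l_t$ normalization) and over $\bm{x}$ (handled since all entrywise bounds are uniform in $\bm{x}$), gives the claimed inequality with failure probability at most $dae^{-(L/b)^2}+\delta/4$. The main obstacle is getting the constant/exponent bookkeeping to land exactly on the stated $\Delta_m(L,l_t,\tau_t)$; the probabilistic content is routine once the Lipschitz event and the Gaussian-tail union bound are set up.
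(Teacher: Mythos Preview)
Your derivation of the identity $\hat{\mu}_t-\tilde{\mu}_t=-\bm{p}(\bm{x})^{\mathrm{T}}\mathbf{M}(\hat{\bm{y}}'_{1:l_t}-\bm{y}'_{1:l_t})$ via the block inverse, your bound $\|\mathbf{M}\|\le\sigma^{-2}$, and your treatment of $|\hat{y}'_i-y'_i|$ via Assumption~\ref{assump:lip} plus Lemma~\ref{lemma:ineq} are all essentially what the paper does (the paper applies Lemma~\ref{lemma:ineq} directly to the difference $\epsilon_j-\epsilon'_i\sim\mathcal{N}(0,2\sigma^2)$ rather than to each noise separately, which saves a factor $\sqrt{2}$ but is otherwise the same idea).

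The genuine gap is your bound on $\bm{p}(\bm{x})$. The $j$-th coordinate is $\bm{k}_t(\bm{x}'_j)^{\mathrm{T}}(\mathbf{K}_t+\sigma^2\mathbf{I})^{-1}\bm{k}_t(\bm{x}) - k(\bm{x}'_j,\bm{x})$, and the inner product in the first term ranges over the $t$ \emph{observed} points, not $l_t$; your operator-norm route therefore gives $|\bm{p}(\bm{x})_j|\lesssim t\sigma^{-2}+1$, which grows with $t$. This is not a constants issue: no amount of bookkeeping will then recover the stated $\Delta_m(L,l_t,\tau_t)$, which is $t$-free. The paper's key step is to invoke Lemma~\ref{lemma:variance} with a \emph{single} pseudo-point $\bm{x}'_j$: the variance reduction is then $\bm{p}(\bm{x})_j^2/(\sigma_t^2(\bm{x}'_j)+\sigma^2)$, and since this cannot exceed $\sigma_t^2(\bm{x})\le k(\bm{x},\bm{x})=1$, one gets $|\bm{p}(\bm{x})_j|\le\sqrt{1+\sigma^2}$ uniformly in $t$. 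The paper then bounds $\sum_i|(\bm{p}^{\mathrm{T}}\mathbf{M})_i|\le\sum_{i,j}|\bm{p}_j||\mathbf{M}_{j,i}|\le l_t^2\sqrt{1+\sigma^2}\,\sigma^{-2}=l_t^2\sqrt{1+\sigma^{-2}}/\sigma$ (using $|\mathbf{M}_{j,i}|\le\sigma^{-2}$, obtained via Cramer's rule and Cauchy interlacing, though your Schur-complement argument for $\|\mathbf{M}\|\le\sigma^{-2}$ yields the same entrywise bound), and multiplying by $Ld\tau_t+2\sigma\sqrt{\log(\cdot)}$ gives exactly $\Delta_m$. Once you have $|\bm{p}(\bm{x})_j|\le\sqrt{1+\sigma^2}$, your $\ell_2$ factorization would in fact produce $l_t$ rather than $l_t^2$, which is tighter and still implies the lemma; but that coordinate bound is the missing ingredient.
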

\begin{proof}
	By Eq.~(\refeq{eq-mean}), we have
	\begin{align}
	\hat{\mu}_t(\bm{x})&=[\bm{k}_t(\bm{x});\bm{k}'_{l_t}(\bm{x})]^{\mathrm{T}}
	\left( \left[
	\begin{matrix}
	\mathbf{K}_t & \widetilde{\mathbf{K}}_{t,l_t}\\
	\widetilde{\mathbf{K}}_{t,l_t}^{\mathrm{T}} & \mathbf{K}'_{l_t}
	\end{matrix}
	\right]+\sigma^2\mathbf{I}\right)^{-1}[\bm{y}_{1:t};\hat{\bm{y}}_{1:l_t}']\\	&=\bm{k}_t(\bm{x})^{\mathrm{T}}(\mathbf{K}_{t}+\sigma^2\mathbf{I})^{-1}\bm{y}_{1:t}+\bm{p}(\bm{x})^{\mathrm{T}}\mathbf{M}([\mu_{t}(\bm{x}'_1);\mu_{t}(\bm{x}'_2);\ldots;\mu_{t}(\bm{x}'_{l_t})]-\hat{\bm{y}}'_{1:l_t})\\
	&=\mu_t(\bm{x})+\bm{p}(\bm{x})^{\mathrm{T}}\mathbf{M}([\mu_{t}(\bm{x}'_1);\mu_{t}(\bm{x}'_2);\ldots;\mu_{t}(\bm{x}'_{l_t})]-\hat{\bm{y}}'_{1:l_t}),
\end{align}
where the second equality is derived by the inverse of a block matrix, and the third one holds by Eq.~(\refeq{eq-mean}). Similarly, we have
\begin{align} \tilde{\mu}_t(\bm{x})&=\mu_t(\bm{x})+\bm{p}(\bm{x})^{\mathrm{T}}\mathbf{M}([\mu_{t}(\bm{x}'_1);\mu_{t}(\bm{x}'_2);\ldots;\mu_{t}(\bm{x}'_{l_t})]-\bm{y}'_{1:l_t}).
\end{align}
Thus, it holds that $\hat{\mu}_t(\bm{x})-\tilde{\mu}_t(\bm{x})=-\bm{p}(\bm{x})^{\mathrm{T}}\mathbf{M}(\hat{\bm{y}}'_{1:l_t}-\bm{y}'_{1:l_t})$.

Next, we examine the upper bound on $|\bm{p}(\bm{x})^{\mathrm{T}}\mathbf{M}(\hat{\bm{y}}'_{1:l_t}-\bm{y}'_{1:l_t})|$.
\begin{align}\label{eq-mid1}
&|\bm{p}(\bm{x})^{\mathrm{T}}\mathbf{M}(\hat{\bm{y}}'_{1:l_t}-\bm{y}'_{1:l_t})|\leq \sum_{i=1}^{l_t}| (\bm{p}(\bm{x})^{\mathrm{T}}\mathbf{M})_i |\cdot | \hat{y}_i'-y'_i |,
\end{align}
where $ (\bm{p}(\bm{x})^{\mathrm{T}}\mathbf{M})_i$ denotes the $i$-th element of $\bm{p}(\bm{x})^{\mathrm{T}}\mathbf{M}$. According to the procedure of Algorithm~\ref{alg:GBO}, the pseudo-point $\bm{x}'_i$ has distance $\tau_t$ with some observed data point and takes the same function value. Assume that the corresponding observed data point for $\bm{x}'_i$ is $\bm{x}_j$, where $j\in [t]$, implying $\hat{y}'_i=y_j$. Thus, we have
\begin{align}\label{eq-mid2}
&| \hat{y}_i'-y'_i | = | y_j-y'_i | = |(f(\bm{x}_j) + \epsilon_j) - (f(\bm{x}'_i) + \epsilon'_i)|
\leq |f(\bm{x}_j) - f(\bm{x}'_i)| + |\epsilon_j - \epsilon'_i|,
\end{align}
where $\epsilon_j,\epsilon'_i \sim \mathcal{N}(0,\sigma^2)$, and the second equality holds because $f(\bm{x}_j)$ and $f(\bm{x}'_i)$ are subject to additive Gaussian noise $\mathcal{N}(0,\sigma^2)$. According to Assumption~\ref{assump:lip}, we have
\begin{equation}\label{eq-mid3}
\mathrm{Pr}\left(\forall t\geq 0, \forall i \in [l_t]: |f(\bm{x}_j)-f(\bm{x}'_i)|\leq L\|\bm{x}_j-\bm{x}'_i\|_1= Ld\tau_t\right) \geq 1-dae^{-(L/b)^2}.
\end{equation}
As $\epsilon_j-\epsilon'_i \sim \mathcal{N}(0,2\sigma^2)$, by Lemma~\ref{lemma:ineq}, we have
\begin{equation}
\mathrm{Pr}\left(|\epsilon_j-\epsilon'_i|\leq 2\sigma\sqrt{\log \frac{4\sum\nolimits^{T-1}_{t=0} l_t}{\delta}}\right)\geq 1-\frac{\delta}{4\sum\nolimits^{T-1}_{t=0} l_t}.
\end{equation}
Applying the union bound leads to
	\begin{align}\label{eq-mid4}
	&\mathrm{Pr}\left(\forall 0 \leq t \leq T-1, \forall i \in [l_t]: | \hat{y_i}'-y'_i|\le Ld\tau_t+2\sigma\sqrt{\log \frac{4\sum\nolimits^{T-1}_{t=0} l_t}{\delta}} \right)\\
&\quad \ge1-dae^{-(L/b)^2}- \left(\sum^{T-1}_{t=0} l_t\right) \cdot \frac{\delta}{4\sum^{T-1}_{t=0} l_t} \geq 1-dae^{-(L/b)^2}-\frac{\delta}{4}.
	\end{align}
Thus, with probability at least $1-dae^{-(L/b)^2}-\delta/4$, it holds that $\forall 0 \leq t\leq T-1$,
\begin{align}\label{eq-mid5}
 |\bm{p}(\bm{x})^{\mathrm{T}}\mathbf{M}(\hat{\bm{y}}'_{1:l_t}-\bm{y}'_{1:l_t})|&\leq \left(Ld\tau_t+2\sigma\sqrt{\log \frac{4\sum\nolimits^{T-1}_{t=0} l_t}{\delta}}\right)\sum_{i=1}^{l_t}| (\bm{p}(\bm{x})^\mathrm{T}\mathbf{M})_i |.
\end{align}
Next we prove an upper bound on $ \sum_{i=1}^{l_t}|(\bm{p}(\bm{x})^\mathrm{T}\mathbf{M})_i|$. Note that
\begin{align}\label{eq-mid-2}
|(\bm{p}(\bm{x})^\mathrm{T}\mathbf{M})_i|\le \sum^{l_t}_{j=1}|\bm{p}(\bm{x})_j|\cdot |\mathbf{M}_{j,i}|,
\end{align}
where $\bm{p}(\bm{x})_j$ denotes the $j$-th element of $\bm{p}(\bm{x})$, and $\mathbf{M}_{j,i}$ denotes the element of the $j$-th row and $i$-th column of $\mathbf{M}$. If only one pseudo-point $(\bm{x}'_j, \hat{y}'_j)$ from $\{(\bm{x}'_i,\hat{y}'_i)\}^{l_t}_{i=1}$ is added into $D_t$, we know from Lemma~\ref{lemma:variance} that the reduction on the posterior variance is
	\begin{equation}
\Delta_v(\bm{x},1,\tau_t)=(\sigma_{t}^2(\bm{x}'_j)+\sigma^2)^{-1} \bm{p}(\bm{x})_j^2 \le 1,
	\end{equation}
where the inequality holds by $k(\bm{x},\bm{x})=1$. This implies
	\begin{equation}\label{eq-mid6}
	\forall j\in[l_t]: |\bm{p}(\bm{x})_j| \le \sqrt{\sigma_t^2(\bm{x}'_j)+\sigma^2}\le \sqrt{1+\sigma^2}.
	\end{equation}
Let $\mathrm{adj}(\cdot)$ denote the adjugate matrix, $[\cdot]_{i,j}$ denote the principle submatrix by deleting the $i$-th row and $j$-th column, and $\lambda_k(\cdot)$ denote the $k$-th largest eigenvalue. By Cramer's rule, $\mathbf{M}=\mathrm{adj}(\mathbf{M}^{-1})/\mathrm{det}(\mathbf{M}^{-1})$, and thus,
\begin{align}\label{eq-mid7}
|\mathbf{M}_{j,i}|=|\mathrm{adj}(\mathbf{M}^{-1})_{j,i}|/\mathrm{det}(\mathbf{M}^{-1})&=\mathrm{det}([\mathbf{M}^{-1}]_{i,j})/\mathrm{det}(\mathbf{M}^{-1})\\
&=\prod_{k=1}^{l_{t}-1}\lambda_k([\mathbf{M}^{-1}]_{i,j})/\prod_{k=1}^{l_t}\lambda_k(\mathbf{M}^{-1})\\
&\leq 1/\lambda_{l_t}(\mathbf{M}^{-1})\leq \sigma^{-2},
\end{align}
where the second equality holds by the definition of the adjugate matrix, and the two inequalities hold by the Cauchy interlacing inequality, leading to $\lambda_1(\mathbf{M}^{-1})\ge\lambda_1([\mathbf{M}^{-1}]_{i,j}) \ge\cdots\geq \lambda_{l_t-1}(\mathbf{M}^{-1})\ge\lambda_{l_t-1}([\mathbf{M}^{-1}]_{i,j})\ge\lambda_{l_t}(\mathbf{M}^{-1})\ge\sigma^{2}$. Combining Eqs.~(\refeq{eq-mid-2}),~(\refeq{eq-mid6}) and~(\refeq{eq-mid7}), we have
	\begin{align}\label{eq-mid-3}
	\sum_{i=1}^{l_t}| (\bm{p}(\bm{x})^\mathrm{T}\mathbf{M})_i |\leq\sum_{i=1}^{l_t} \sum_{j=1}^{l_t}|\bm{p}(\bm{x})_j|\cdot |\mathbf{M}_{j,i}|\le \sqrt{1+\sigma^2}\sigma^{-2}l_t^2.
	\end{align}
Applying Eq.~(\refeq{eq-mid-3}) to Eq.~(\refeq{eq-mid5}), we have
\begin{equation}
\begin{aligned}
&\mathrm{Pr}\Big( \forall 0 \leq t\leq T - 1,\forall \bm{x}\in\mathcal{X}: |\bm{p}(\bm{x})^{\mathrm{T}}\mathbf{M}(\hat{\bm{y}}'_{1:l_t} -\bm{y}'_{1:l_t})|  \\
&\qquad \leq \left(Ld\tau_t+ 2\sigma\sqrt{\log \frac{4\sum\nolimits^{T-1}_{t=0} l_t}{\delta}}\right)\sqrt{1+\sigma^2}\sigma^{-2}l_t^2\Big) \ge 1-dae^{-(L/b)^2}-\delta/4.
\end{aligned}
\end{equation}
Thus, the lemma holds.
\end{proof}

The proof of Theorem~\ref{theo:general} is inspired by that of Theorem~2 in~\citep{Srinivas2010GPUCB}, which gives an upper bound of UCB on the cumulative regret $R_T$. Their proof intuition is mainly that the instantaneous regret $r_t$ can be upper bounded by the width of confidence interval of $f(\bm{x}_t)$, relating to the posterior variance. The generation of pseudo-points will introduce another quantity into the upper bound on $r_t$, characterized by the error on the posterior mean in Lemma~\ref{lemma:mean}.

\begin{myproof}{Theorem~\ref{theo:general}}
According to Assumption~\ref{assump:lip} and $\beta_t=2\log(2\pi^2t^2(dt^2rL)^d/(3\delta))$, where $L=b\sqrt{\log(4da/\delta)}$, we can apply Lemma~5.7 in~\cite{Srinivas2010GPUCB} to derive that
\begin{align}\label{eq-mid-1}
&	\mathrm{Pr}(\forall t\ge 1: |f(\bm{x}^*)-\tilde{\mu}_{t-1}([\bm{x}^*]_t)|\le \beta_t^{1/2}\tilde{\sigma}_{t-1}([\bm{x}^*]_t)+1/t^2 )\ge 1-\delta/2,
\end{align}
where $[\bm{x}^*]_t$ denotes the discretized data point closest to $\bm{x}^*$ in the $t$-th iteration. Note that $\Delta_m(l_{t},\tau_{t})$ is just $\Delta_m(L,l_{t},\tau_{t})$ with $L=b\sqrt{\log(4da/\delta)}$ in Lemma~\ref{lemma:mean}. By the definition of $r_t$, we have
\begin{align}\label{eq-rt}
\forall t \geq 1: r_t&=f(\bm{x}^*)-f(\bm{x}_t)\\
&\le\beta_t^{1/2}\tilde{\sigma}_{t-1}([\bm{x}^*]_t)+\tilde{\mu}_{t-1}([\bm{x}^*]_t)-f(\bm{x}_t)+1/t^2\\
&\le  \beta_t^{1/2}\tilde{\sigma}_{t-1}([\bm{x}^*]_t)+\hat{\mu}_{t-1}([\bm{x}^*]_t)-f(\bm{x}_t)+1/t^2+\Delta_m(l_{t-1},\tau_{t-1})\\
&= \beta_t^{1/2}\hat{\sigma}_{t-1}([\bm{x}^*]_t)+\hat{\mu}_{t-1}([\bm{x}^*]_t)-f(\bm{x}_t)+1/t^2+\Delta_m(l_{t-1},\tau_{t-1})\\
&\leq \beta_t^{1/2}\hat{\sigma}_{t-1}(\bm{x}_t)+\hat{\mu}_{t-1}(\bm{x}_t)-f(\bm{x}_t)+1/t^2+\Delta_m(l_{t-1},\tau_{t-1})\\
&\leq \beta_t^{1/2}\hat{\sigma}_{t-1}(\bm{x}_t)+\tilde{\mu}_{t-1}(\bm{x}_t)-f(\bm{x}_t)+1/t^2+2\Delta_m(l_{t-1},\tau_{t-1})\\
&\leq 2\beta_t^{1/2}\hat{\sigma}_{t-1}(\bm{x}_t)+1/t^2+2\Delta_m(l_{t-1},\tau_{t-1}),
\end{align}
where the first inequality holds with probability at least $1-\delta/2$ by Eq.~(\refeq{eq-mid-1}), the second and fourth inequalities hold with probability at least $1-dae^{-(L/b)^2}-\delta/4=1-\delta/2$ by Lemma~\ref{lemma:mean}, the equality holds by $\forall \bm{x}: \hat{\sigma}_{t-1}(\bm{x})=\tilde{\sigma}_{t-1}(\bm{x})$, the third inequality holds because $\bm{x}_t$ is selected by maximizing $\hat{\mu}_{t-1}(\bm{x})+\beta_t^{1/2}\hat{\sigma}_{t-1}(\bm{x})$ in Eq.~(\refeq{eq-UCB}), and the last inequality holds with probability at least $1-\delta/4$ by Lemma~5.5 in~\cite{Srinivas2010GPUCB}. Note that to prove Lemma~5.7 in~\cite{Srinivas2010GPUCB} and Lemma~\ref{lemma:mean}, Assumption~\ref{assump:lip}, i.e., Eq.~(\refeq{eq-assumption}), is both used; thus, the probability $dae^{-(L/b)^2}=\delta/4$ has been repeated. By the union bound, we have
\begin{align}
\mathrm{Pr}(\forall t \geq 1: r_t \leq 2\beta_t^{1/2}\hat{\sigma}_{t-1}(\bm{x}_t)+1/t^2+2\Delta_m(l_{t-1},\tau_{t-1})) \geq 1-\delta/2-\delta/4-\delta/4=1-\delta,
\end{align}
implying
\begin{align}
&\qquad\; \mathrm{Pr}\Big(R_T=\sum\nolimits_{t=1}^{T}r_t \leq \sum\nolimits^T_{t=1}(2\beta_t^{1/2}\hat{\sigma}_{t-1}(\bm{x}_t)+1/t^2+2\Delta_m(l_{t-1},\tau_{t-1}))\Big) \geq 1-\delta.
\end{align}
By the Cauchy–Schwarz inequality, $C=8/\log(1+\sigma^{-2})$ and $\forall t \leq T: \beta_t \leq \beta_T$, we have
\begin{align}
\sum\nolimits_{t=1}^{T}2\beta_t^{1/2}\hat{\sigma}_{t-1}(\bm{x}_t)&\le \sqrt{T\sum\nolimits_{t=1}^{T}4\beta_t\hat{\sigma}^2_{t-1}(\bm{x}_t)}\\
&\le \sqrt{ \frac{CT\beta_T}{2}\sum\nolimits_{t=1}^{T}\log(1+\sigma^{-2}\hat{\sigma}^2_{t-1}(\bm{x}_t))}.
\end{align}
Let $PP_t$ denote the pseudo-points generated in the $t$-th iteration, and $\hat{\bm{y}}_{PP_t}$ denote their selected objective values. Let $H(\cdot)$ denote the entropy. We have
\begin{equation}
\begin{aligned}
&\frac{1}{2}\sum\limits_{t=1}^{T}\log(1+\sigma^{-2}\hat{\sigma}^2_{t-1}(\bm{x}_t)) + H(\bm{y}_{1:T}| \bm{f}_{1:T})\\
&=\frac{1}{2}\sum\limits_{t=1}^{T}\log(1+\sigma^{-2}\hat{\sigma}^2_{t-1}(\bm{x}_t)) + \frac{1}{2}\log(\det(2\pi e \sigma^2\mathbf{I}))\\
&=\frac{1}{2}\sum\limits_{t=1}^{T}\log(2\pi e(\sigma^2+\hat{\sigma}^2_{t-1}(\bm{x}_t)))\\
&=H(y_1\mid \hat{\bm{y}}_{PP_1}) + H(y_2\mid y_1,\hat{\bm{y}}_{PP_2}) + \cdots + H(y_T\mid \bm{y}_{1:T-1},\hat{\bm{y}}_{PP_T})\\
&=H(y_1\mid \hat{\bm{y}}_{PP}) + H(y_2\mid y_1,\hat{\bm{y}}_{PP}) + \cdots + H(y_T \mid \bm{y}_{1:T-1},\hat{\bm{y}}_{PP})\\
&=H(\bm{y}_{1:T}\mid \hat{\bm{y}}_{PP}),
\end{aligned}
\end{equation}
where the first equality holds because $f(\bm{x})$ is subject to additive Gaussian noise $\mathcal{N}(0,\sigma^2)$. Thus,
\begin{align}
\frac{1}{2}\sum\limits_{t=1}^{T}\log(1+\sigma^{-2}\hat{\sigma}^2_{t-1}(\bm{x}_t))&=H(\bm{y}_{1:T}\mid \hat{\bm{y}}_{PP}) - H(\bm{y}_{1:T}\mid \bm{f}_{1:T})\\
&=H(\hat{\bm{y}}_{PP}\mid \bm{y}_{1:T}) - H(\hat{\bm{y}}_{PP}) + H(\bm{y}_{1:T})-  H(\bm{y}_{1:T}\mid \bm{f}_{1:T})\\
&=I(\bm{y}_{1:T};\bm{f}_{1:T})-I(\bm{y}_{1:T};\hat{\bm{y}}_{PP})\\
&\leq \gamma'_T.
\end{align}
Considering $\sum_{t\geq 1} 1/t^2=\pi^2/6<2$, Eq.~(\refeq{eq-mid-5}) holds. Thus, the theorem holds.\vspace{0.5em}
\end{myproof}

Under the same assumption, it has been proved~\cite{Srinivas2010GPUCB} that the cumulative regret $R_T$ of UCB satisfies
\begin{align}\label{eq-mid-4}
\mathrm{Pr}\big(R_T \leq \sqrt{ CT\beta_T\gamma_T}+2\big) \geq 1-\delta,
\end{align}
where $\gamma_T=\max_{A: |A|=T} I(\bm{y}_A;\bm{f}_A)$, and the other parameters have the same meaning as that in Theorem~\ref{theo:general}. Our bound on $R_T$ of UCB-PP is actually a generalization of that of UCB. Without generating pseudo-points, $\forall t \geq 0: l_t=0 \wedge I(\bm{y}_{A};\hat{\bm{y}}_{PP})=0$, and thus, $\gamma'_T=\gamma_T \wedge \Delta_m(l_t,\tau_t)=0$, implying that Eq.~(\refeq{eq-mid-5}) specializes to Eq.~(\refeq{eq-mid-4}). As $\gamma'_T \leq \gamma_T$, the comparison between Eqs.~(\refeq{eq-mid-5}) and~(\refeq{eq-mid-4}) suggests that the generation of pseudo-points can be helpful if the negative influence of introducing the error on the posterior mean, i.e., introducing the term $\Delta_m(l_t,\tau_t)$, can be compensated by the positive influence of reducing the posterior variance, i.e., introducing the term $I(\bm{y}_{1:T};\hat{\bm{y}}_{PP})$.

\section{Empirical Study}\label{sec-exp}

In this section, we empirically compare BO-PP with BO. Three common acquisition functions, i.e., UCB, PI and EI, are used. The ARD squared exponential kernel is employed, whose hyper-parameters are tuned by maximum likelihood estimation (MLE), and the acquisition function is maximized via the DIRECT algorithm~\cite{Jones1993DIRECT}. To alleviate the ``cold start'' issue, each algorithm starts with five random initial points. To compare BO-PP with BO on each problem, we repeat their running 20 times independently and report the average results; in each running, BO-PP and BO use the same five random initial points. The noise level is set to $\sigma^2=0.0001$, and the iteration budget is set to 100.

In the $(t+1)$-th iteration of BO-PP, for each point in $D_{t}$, one pseudo-point is generated by randomly sampling within its distance $\tau_{t}$ and taking the same function value; thus, $l_t=|D_t|$. To control the error of objective values with pseudo-points increasing, $\tau_t$ is set to $r\tau_0/(dl_t)$, which decreases with $l_t$. Note that $r$ corresponds to the width of each dimension of the search domain. $\tau_0$ is set to a small value. We will use 0.01, 0.001 and 0.0001 to explore its influence, and the corresponding algorithms are denoted as BO-PP01, BO-PP001 and BO-PP0001, respectively.

We use four common synthetic benchmark functions: \textit{Dropwave}, \textit{Griewank}, \textit{Hart6} and \textit{Rastrigin}, whose dimensions are 2, 2, 6 and 2, respectively. Their search domains are scaled to $[-1,1]^d$. As the global minima are known, the simple regret $S_T$ is used as the metric. We also employ four real-world optimization problems, widely used in BO experiments~\cite{SpringenbergBONET16,Wang17MES,xu2018fast}. The first is to tune the hyper-parameters, i.e., box constraint $C\in[0.001,1000]$ and kernel scale $l\in[0.0001,1]$, of SVM for classification on the data set \textit{Wine quality} (1,599~\#inst, 11~\#feat). The second is to tune the hyper-parameters of 1-hidden-layer neural network (NN) for this task. The NN is trained by backpropagation, and the hyper-parameters are the number of neurons $n\in[1,100]$ and the learning rate $l_r\in [0.000001, 1]$. The last two problems are to tune the hyper-parameters of 1-hidden-layer NN for classification on \textit{Breast cancer} (699~\#inst, 9~\#feat) and regression on \textit{Boston housing} (506~\#inst, 13~\#feat), respectively. The NN is trained by Levenberg-Marquardt optimization, and there are four hyper-parameters: $n\in[1,100]$, the damping factor $\mu\in[0.000001, 100]$, the $\mu$-decrease and $\mu$-increase factors $\mu_{dec}\in [0.01,1]$, $\mu_{inc}\in[1.01, 20]$. All data sets are randomly split into training/validation/test sets with ratio 0.7/0.2/0.1, and the performance on validation sets is used as the objective $f$. For classification, $f$ is the classification accuracy; for regression, $f$ equals 20 minus the regression L2-loss. All codes and data sets can be downloaded from {\small\url{https://github.com/TELO19/BOPP}}.

For UCB, $\beta_t$ in Eq.~(\ref{eq-UCB}) is set to $2\log(t^{d/2+2}\pi^2/3\delta)$ where $\delta=0.1$, as suggested in~\cite{Brochu2010physical,Srinivas2010GPUCB}. For PI and EI, the best observed function value by far is used as $f(\bm{x}^+)$ in Eqs.~(\refeq{eq-PI}) and~(\refeq{eq-EI}). The results are summarized in Table~\ref{table_UCB_results}. We can observe that UCB-PP0001 is always better than UCB, verifying our theoretical analysis; UCB-PP01 and UCB-PP001 surpass UCB in most cases, disclosing that the performance of UCB-PP is not very sensitive to the distance $\tau_t$. Also, PI-PP and EI-PP perform better than PI and EI, respectively, in most cases, showing the applicability of generating pseudo-points.

\begin{table}[t]
	\centering
	\caption{The results (mean$\pm$std.) of BO-PP and BO on synthetic benchmark functions and real-world optimization problems, when reaching the iteration budget. $S_T$: the smaller, the better; $f$: the larger, the better. The bolded values denote that BO-PP is no worse than BO. UCB, PI and EI are tested.}\small
	\begin{tabular}{llllll}
		\toprule
		\multicolumn{2}{c}{Function}        &  UCB    &  UCB-PP01  &  UCB-PP001  &  UCB-PP0001	\\
		\hline
		\multirow{4}{0.01\linewidth}{$S_T$}&\textit{Dropwave}          & 0.2710$\pm$0.1311     &  \textbf{0.2232$\pm$0.1053}   & \textbf{0.1630$\pm$0.1014}   & \textbf{0.2121$\pm$0.1038}	\\
		&\textit{Griewank}      & 0.2357$\pm$0.2125          &   \textbf{0.2272$\pm$0.1644}    &  \textbf{0.2350$\pm$0.1690}    & \textbf{0.2085$\pm$0.1177	}\\
&\textit{Hart6}  & 1.0256$\pm$0.3498          &   1.0565$\pm$0.3620      &   1.0868$\pm$0.3153       & \textbf{0.9276$\pm$0.3307 }         	\\
		&\textit{Rastrigin}		  &    3.3492$\pm$3.2602      &   3.6975$\pm$2.7991        &      3.5124$\pm$2.4124      &  \textbf{3.0077$\pm$2.3245}			\\
		\hline
		\multirow{4}{0.01\linewidth}{$f$}& \textit{SVM$\_$wine}& 0.6182$\pm$0.0029  & \textbf{0.6186$\pm$0.0030} & \textbf{0.6186$\pm$0.0036} & \textbf{0.6189$\pm$0.0042} \\
		&\textit{NN$\_$wine}		  &    0.9149$\pm$0.0004    &   \textbf{0.9151$\pm$0.0005}       &       \textbf{0.9151$\pm$0.0004}    &     \textbf{0.9151$\pm$0.0004}        \\
		&\textit{NN$\_$cancer}	  &	0.9585$\pm$0.0006	    &  \textbf{0.9589$\pm$0.0006}  &  \textbf{0.9589$\pm$0.0006}   &\textbf{0.9590$\pm$0.0006}			\\
		&\textit{NN$\_$housing}		  &    8.6733$\pm$1.6916     &   \textbf{8.6776$\pm$1.7149}        &      \textbf{9.0691$\pm$1.8656}      &  \textbf{8.7216$\pm$1.8076}			\\
\hline\hline
\multicolumn{2}{c}{Function}          &  PI    &  PI-PP01  &  PI-PP001  &  PI-PP0001	\\
		\hline
			\multirow{4}{0.01\linewidth}{$S_T$}& \textit{Dropwave}          & 0.1526$\pm$0.1534          & \textbf{0.1221$\pm$0.1462}    & \textbf{0.1251$\pm$0.1355}      & \textbf{0.1457$\pm$0.1539}			\\
		&\textit{Griewank}      & 0$\pm$0           & \textbf{0$\pm$0}          & \textbf{0$\pm$0}         & \textbf{0$\pm$0}			\\		
		&\textit{Hart6}  & 0.5795$\pm$0.2959     & \textbf{0.4558$\pm$0.1048}   & \textbf{0.5599$\pm$0.0982}  & \textbf{0.5500$\pm$0.2529}    \\		
		&\textit{Rastrigin}		  &    0.0524$\pm$0.2285      &      \textbf{0.0524$\pm$0.2285}     &      \textbf{0$\pm$0}      &  	\textbf{0.0524$\pm$0.2285}		\\
		\hline
		\multirow{4}{0.01\linewidth}{$f$}&\textit{SVM$\_$wine}& 0.6192$\pm$0.0037  & 0.6176$\pm$0.0025 & \textbf{0.6204$\pm$0.0050}& \textbf{0.6208$\pm$0.0053}\\
		&\textit{NN$\_$wine}		  &    0.9140$\pm$0.0011    &    \textbf{0.9143$\pm$0.0006}      &        \textbf{0.9140$\pm$0.0008}   &     0.9138$\pm$0.0008        \\
		&\textit{NN$\_$cancer}	  &	0.9571$\pm$0.0024   &  \textbf{0.9578$\pm$0.0020}  &   \textbf{0.9576$\pm$0.0024}  & \textbf{0.9574$\pm$0.0024}		\\
		&\textit{NN$\_$housing}	  &7.5570$\pm$1.4822   & \textbf{7.9702$\pm$1.4000}  &  \textbf{7.8585$\pm$1.5515}    &\textbf{7.6147$\pm$1.3592}	\\
\hline\hline
\multicolumn{2}{c}{Function}          &  EI  &    EI-PP01  &  EI-PP001  &  EI-PP0001	\\
		\hline
		\multirow{4}{0.01\linewidth}{$S_T$}&\textit{Dropwave}          & 0.2557$\pm$0.1720      &  \textbf{0.1924$\pm$0.0818 }    & \textbf{0.2307$\pm$0.1461}   & \textbf{0.2276$\pm$0.1752}	\\
		&\textit{Griewank}      & 0.3098$\pm$0.1722        &  \textbf{0.3028$\pm$0.1005}    & 0.3187$\pm$0.1594     &\textbf{0.2729$\pm$0.1471}		\\
		&\textit{Hart6} & 0.6652$\pm$0.2685        &  \textbf{0.6050$\pm$0.2328}    &  \textbf{0.6028$\pm$0.1656}   &   0.6828$\pm$0.3081        	\\	
		&\textit{Rastrigin}		  &    3.3069$\pm$2.4955      & \textbf{2.6602$\pm$2.2063}  & \textbf{3.0492$\pm$1.5602}   &  \textbf{3.1987$\pm$2.3818}	\\
		\hline
		\multirow{4}{0.01\linewidth}{$f$}&\textit{SVM$\_$wine}& 0.6189$\pm$0.0037  & 0.6182$\pm$0.0035 & \textbf{0.6198$\pm$0.0037} &  0.6179$\pm$0.0034\\
		&\textit{NN$\_$wine}		  &    0.9149$\pm$0.0006     &   \textbf{0.9150$\pm$0.0005}       &       \textbf{0.9150$\pm$0.0004}   &     0.9148$\pm$0.0005        \\
&\textit{NN$\_$cancer}	  &	0.9587$\pm$0.0006	    &  \textbf{0.9589$\pm$0.0006} &  \textbf{0.9588$\pm$0.0006}    & \textbf{0.9587$\pm$0.0006}	\\
		&\textit{NN$\_$housing}	  &    8.1780$\pm$1.8382   &    8.0277$\pm$1.3844       &   8.0471$\pm$1.5024         & \textbf{8.1992$\pm$1.7971}	\\
		\bottomrule
	\end{tabular}\label{table_UCB_results}
\end{table}

Furthermore, we plot the curves of the simple regret $S_T$ or the objective $f$ over iterations for each algorithm on each problem, as shown in Figures~\ref{fig-UCB-real-world} to~\ref{fig-EI-benchmark}. Figure~\ref{fig-UCB-real-world} shows the curves of UCB-PP and UCB on real-world problems. It can be observed that on each problem, there is at least one curve of UCB-PP almost always above that of UCB, implying that UCB-PP can consistently outperform UCB during the running process. The other five figures show similar observations.

\begin{figure*}[t]\centering
\begin{minipage}[c]{0.5\linewidth}\centering
	\includegraphics[width=1\linewidth]{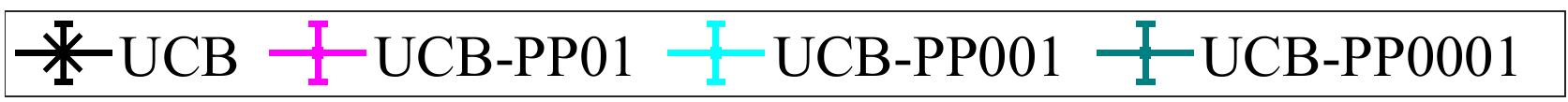}
\end{minipage}
\\\vspace{0.2em}
\begin{minipage}[c]{0.24\linewidth}\centering
	\includegraphics[width=1\linewidth]{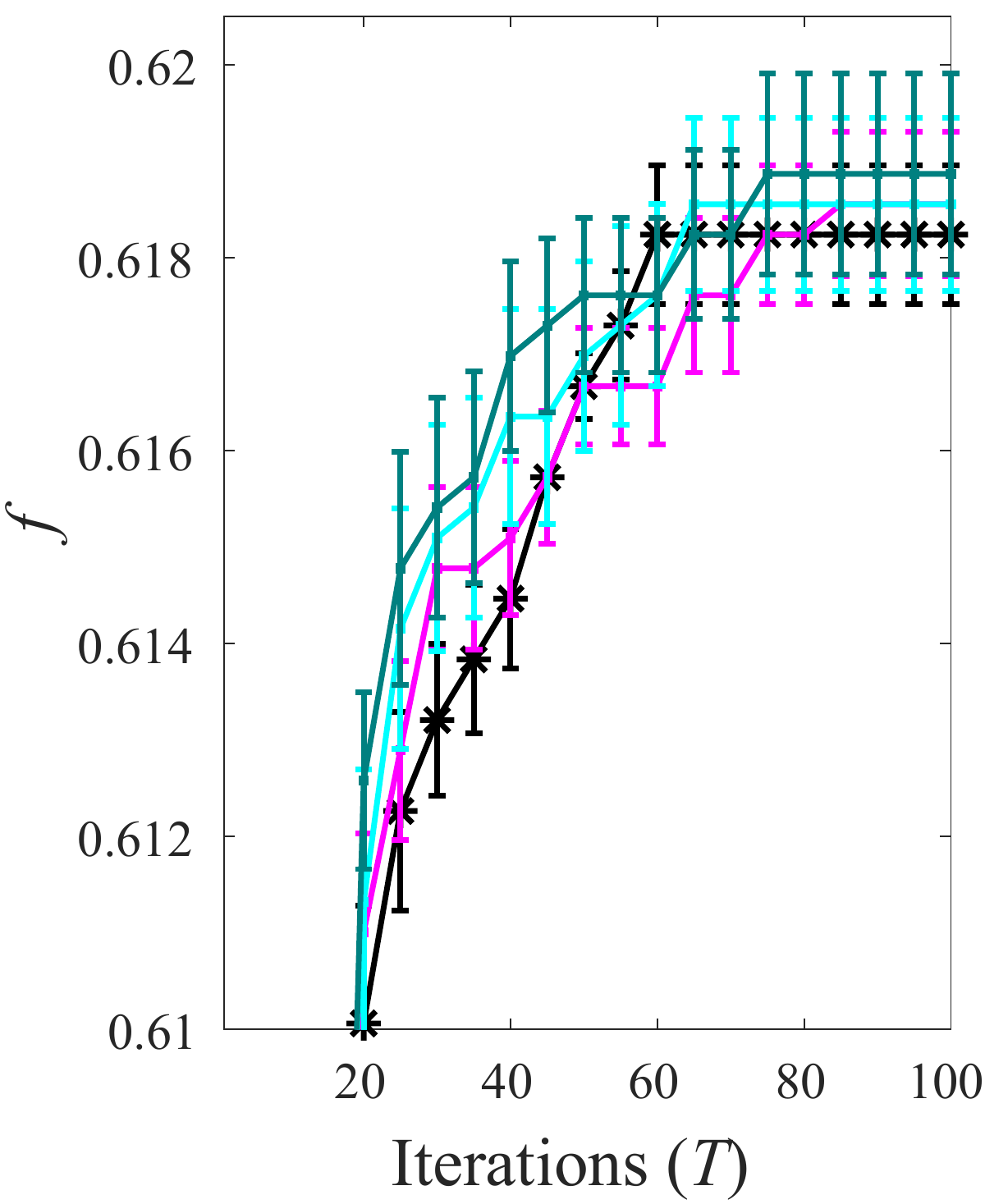}
\end{minipage}\ \
\begin{minipage}[c]{0.24\linewidth}\centering
	\includegraphics[width=1\linewidth]{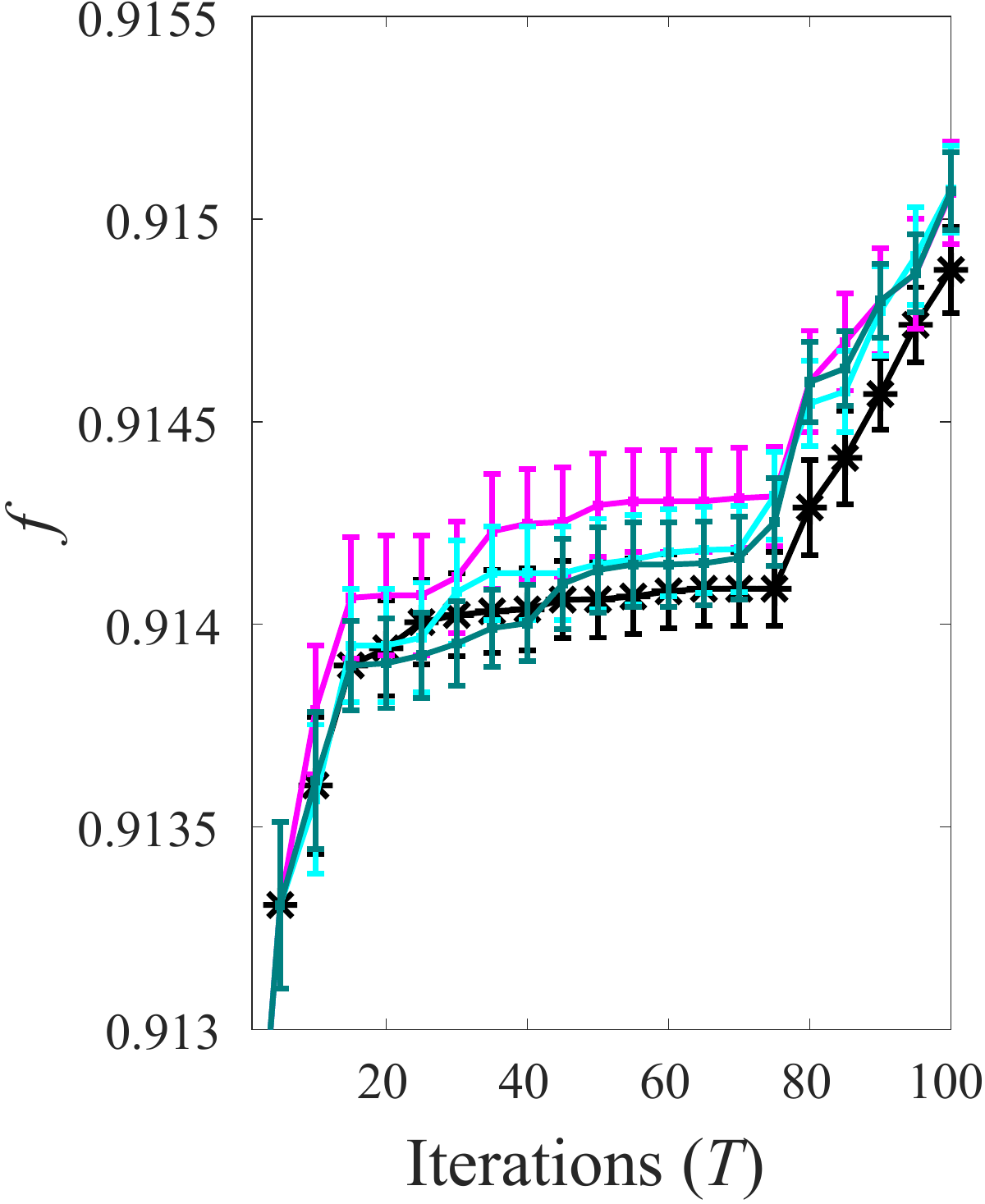}
\end{minipage}\ \
\begin{minipage}[c]{0.24\linewidth}\centering
	\includegraphics[width=1\linewidth]{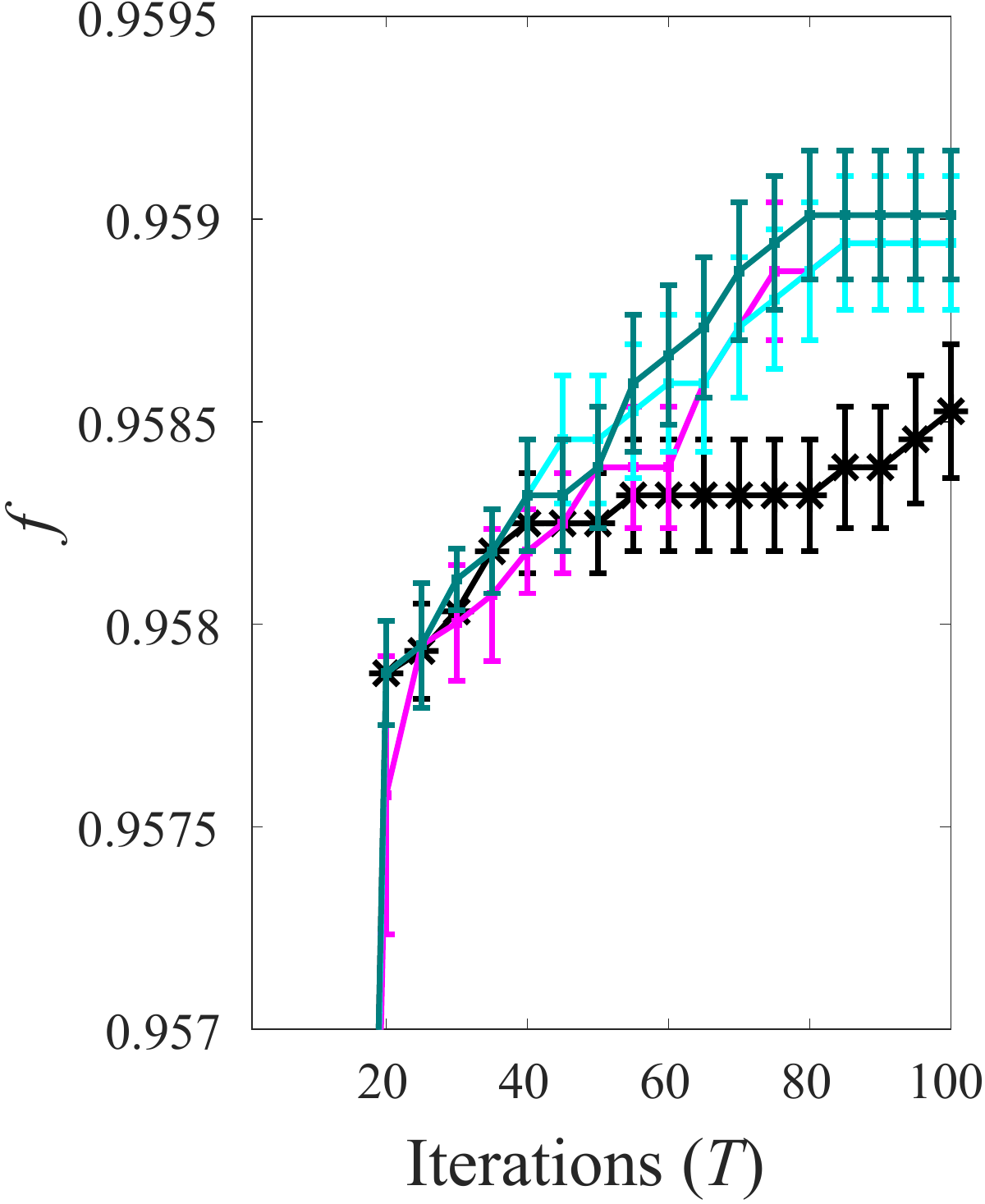}
\end{minipage}
\begin{minipage}[c]{0.24\linewidth}\centering
	\includegraphics[width=1\linewidth]{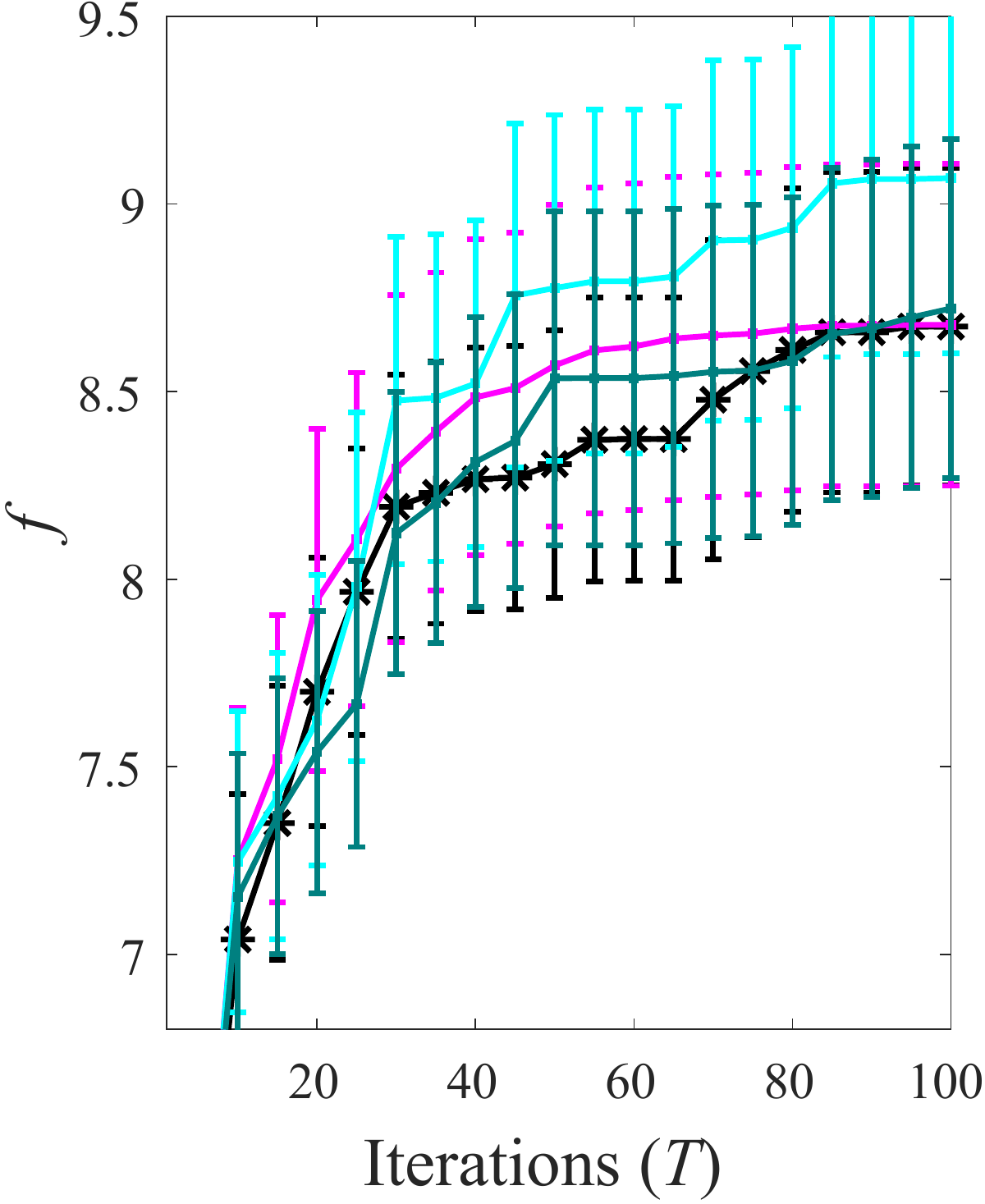}
\end{minipage}\ \
\\\vspace{0.2em}
\begin{minipage}[c]{0.24\linewidth}\centering
	\small(a) \textit{SVM$\_$wine}
\end{minipage}\ \
\begin{minipage}[c]{0.24\linewidth}\centering
	\small(b) \textit{NN$\_$wine}
\end{minipage}\ \
\begin{minipage}[c]{0.24\linewidth}\centering
	\small(c) \textit{NN$\_$cancer}
\end{minipage}
\begin{minipage}[c]{0.24\linewidth}\centering
	\small(d) \textit{NN$\_$housing}
\end{minipage}
\caption{The results (mean$\pm$(1/4)std.) of UCB-PP and UCB on real-world optimization problems. $f$: the larger, the better.}\label{fig-UCB-real-world}
\end{figure*}

\begin{figure*}[t!]\centering
	\begin{minipage}[c]{0.5\linewidth}\centering
		\includegraphics[width=1\linewidth]{UCB_legend}
	\end{minipage}
	\\\vspace{0.2em}
	\begin{minipage}[c]{0.24\linewidth}\centering
		\includegraphics[width=1\linewidth]{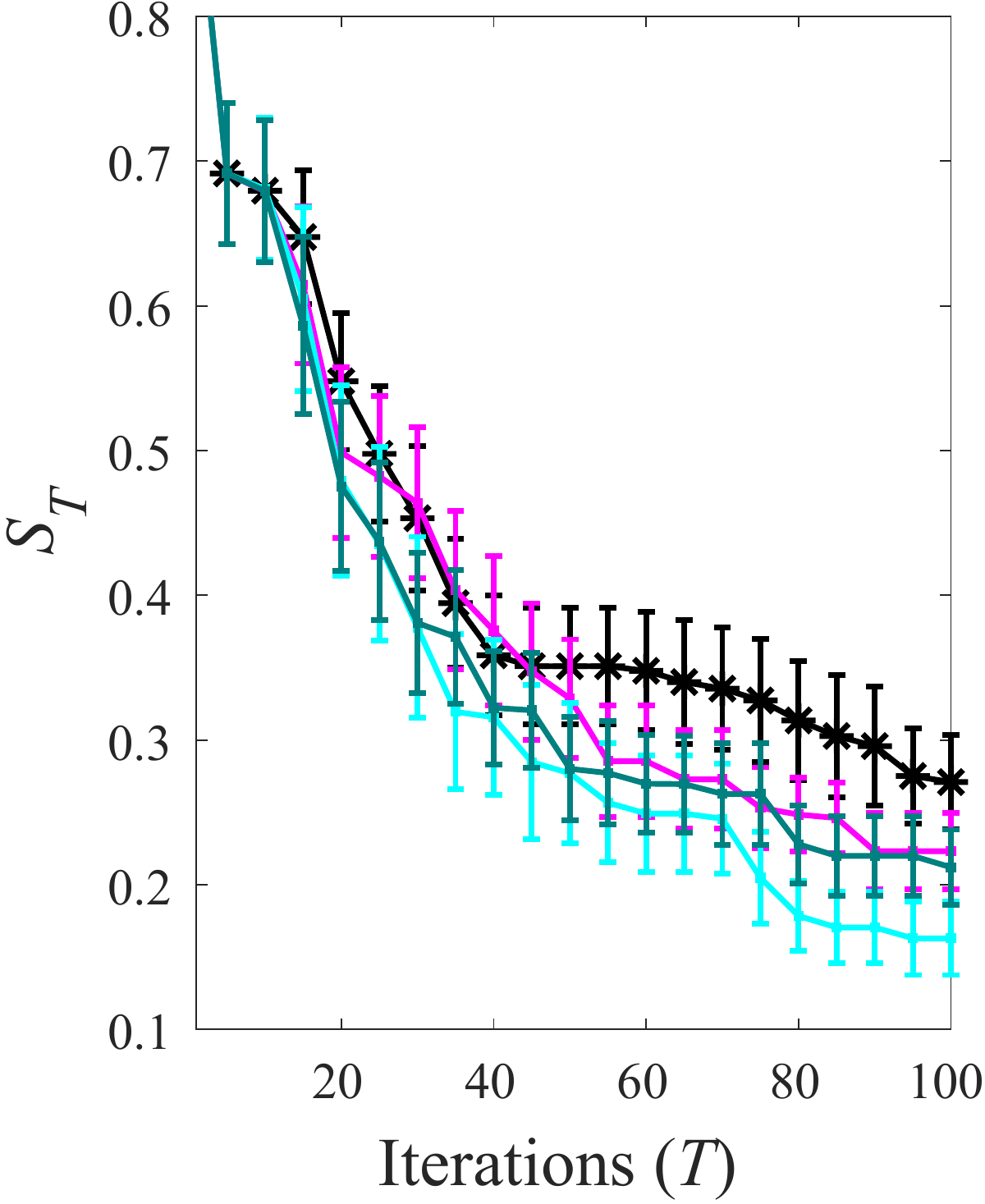}
	\end{minipage}\ \
	\begin{minipage}[c]{0.24\linewidth}\centering
		\includegraphics[width=1\linewidth]{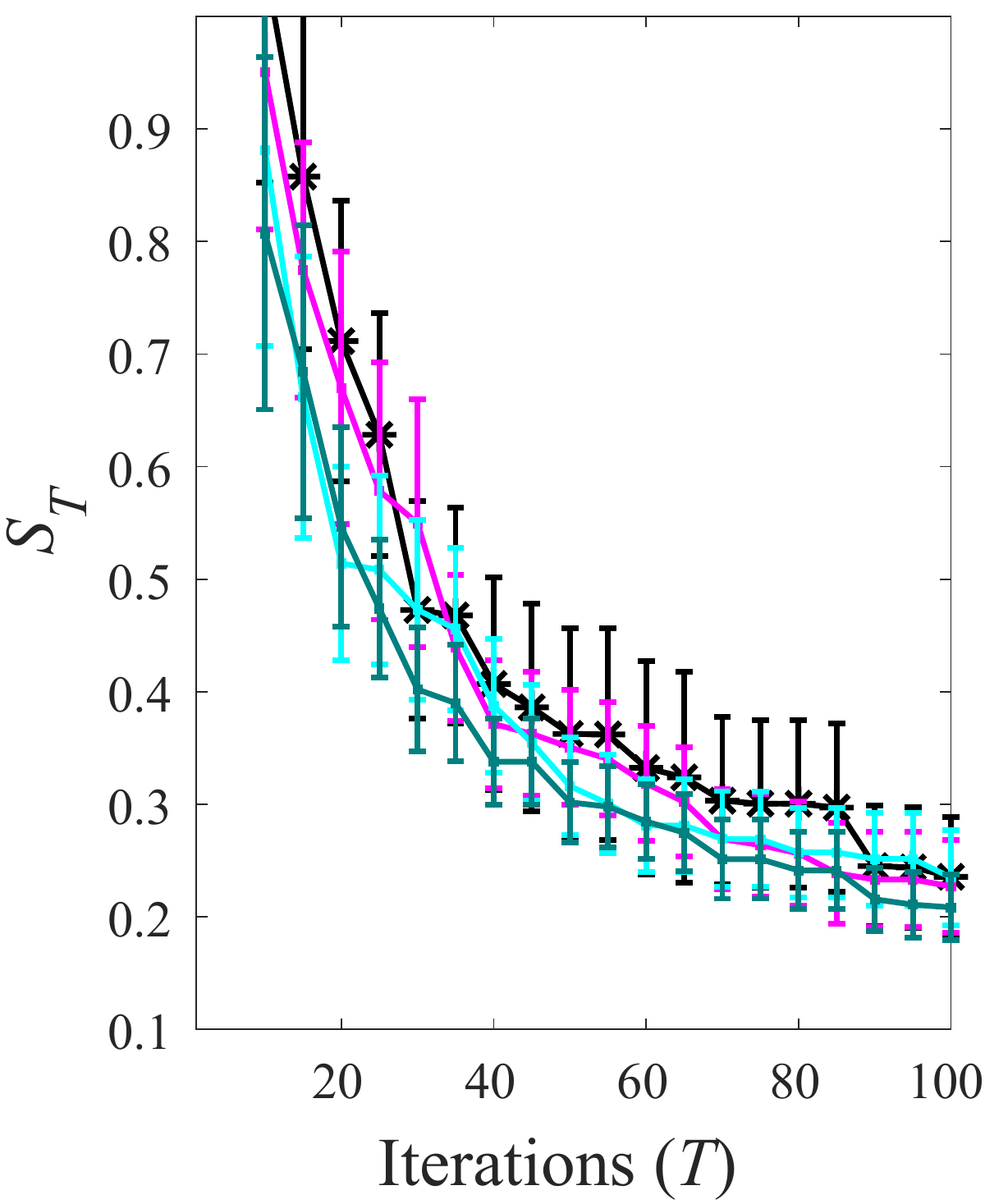}
	\end{minipage}\ \
	\begin{minipage}[c]{0.24\linewidth}\centering
		\includegraphics[width=1\linewidth]{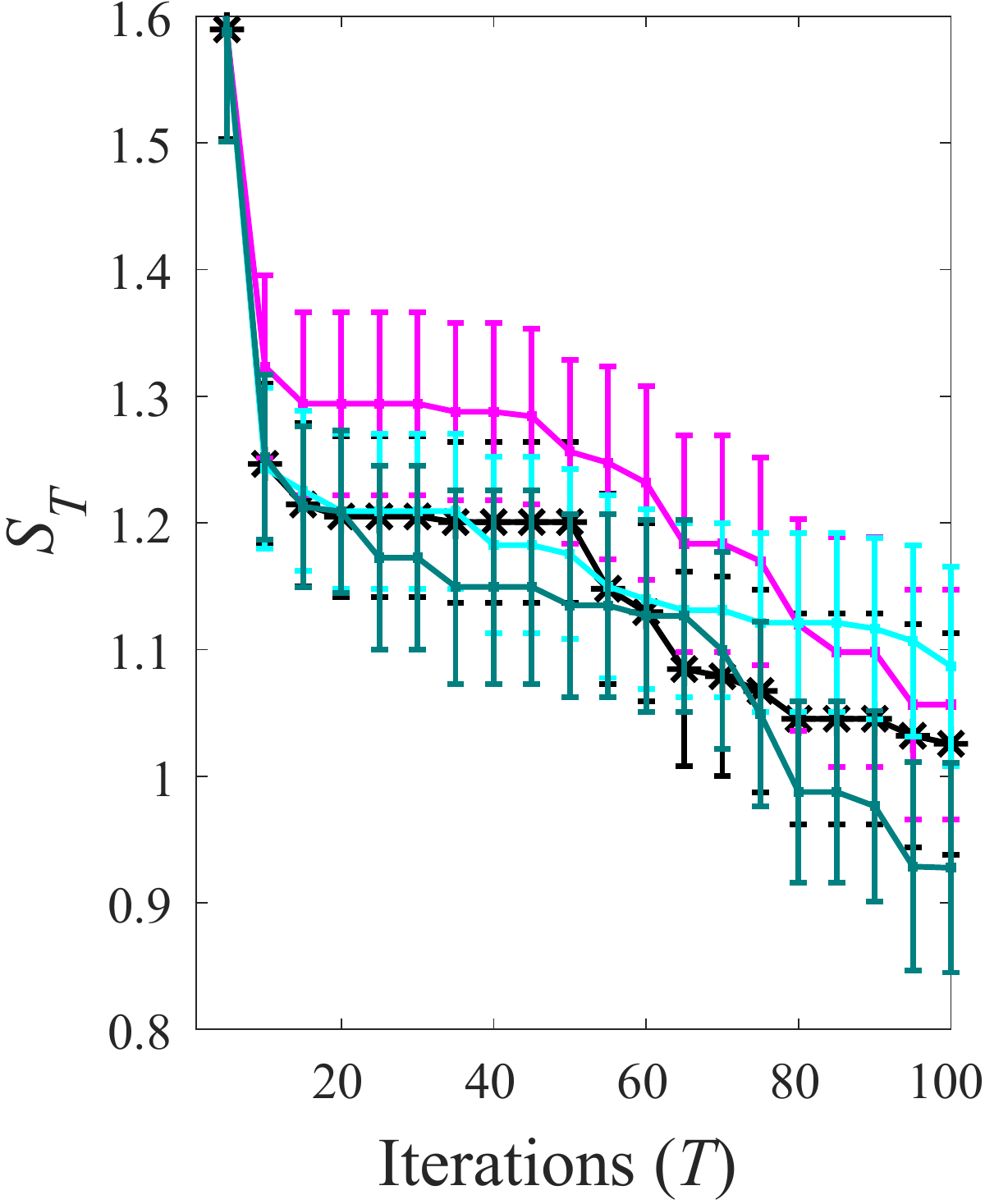}
	\end{minipage}
	\begin{minipage}[c]{0.24\linewidth}\centering
		\includegraphics[width=1\linewidth]{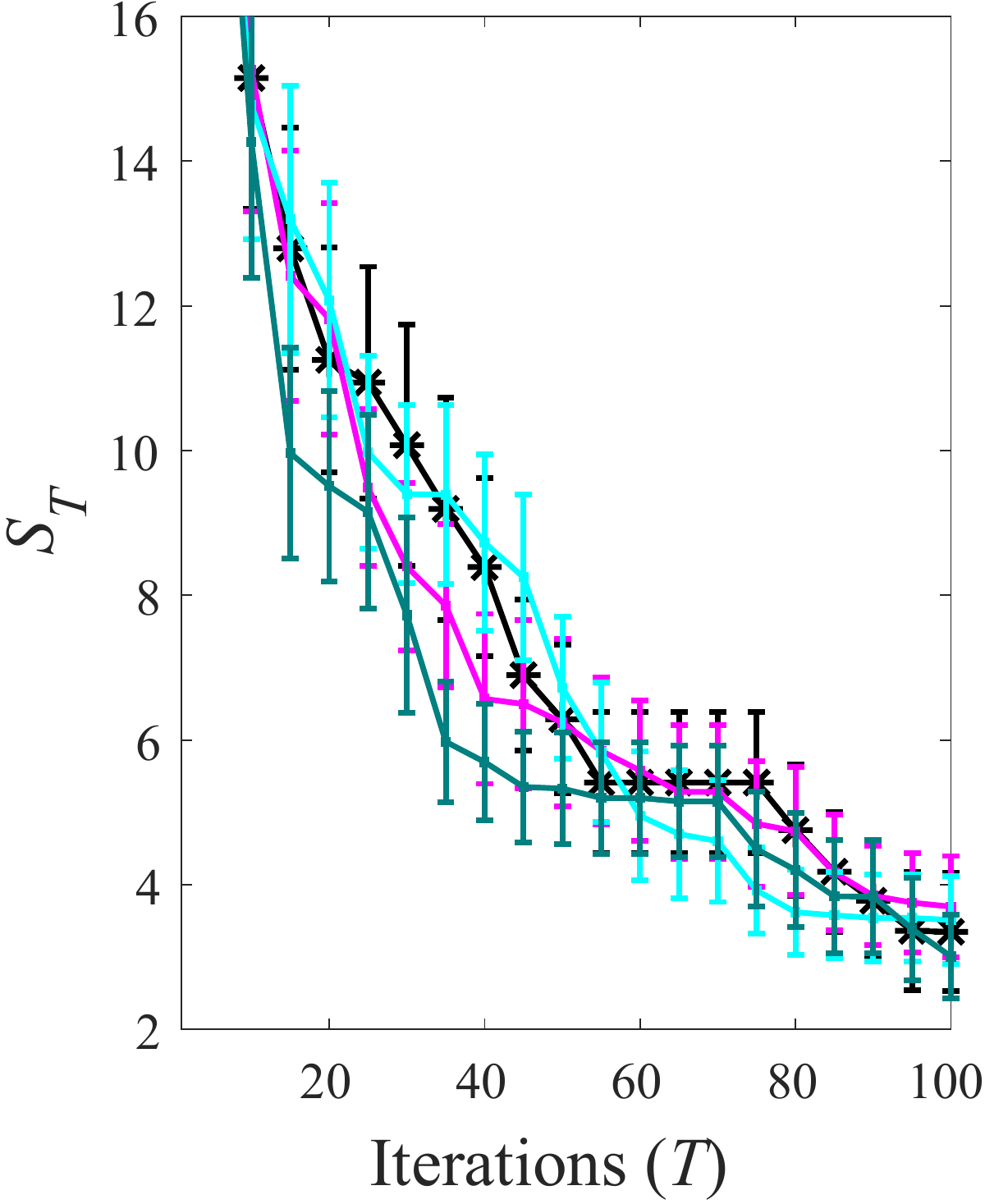}
	\end{minipage}
	\\\vspace{0.2em}
	\begin{minipage}[c]{0.24\linewidth}\centering
		\small(a) \textit{Dropwave}
	\end{minipage}\ \
	\begin{minipage}[c]{0.24\linewidth}\centering
		\small(b) \textit{Griewank}
	\end{minipage}\ \
	\begin{minipage}[c]{0.24\linewidth}\centering
		\small(c) \textit{Hart6}
	\end{minipage}
	\begin{minipage}[c]{0.24\linewidth}\centering
		\small(d) \textit{Rastrigin}
	\end{minipage}
	\caption{The results (mean$\pm$(1/4)std.) of UCB-PP and UCB on synthetic benchmark functions. $S_T$: the smaller, the better.}\label{fig-UCB-benchmark}
\end{figure*}

\begin{figure*}[t!]\centering
\begin{minipage}[c]{0.45\linewidth}\centering
	\includegraphics[width=1\linewidth]{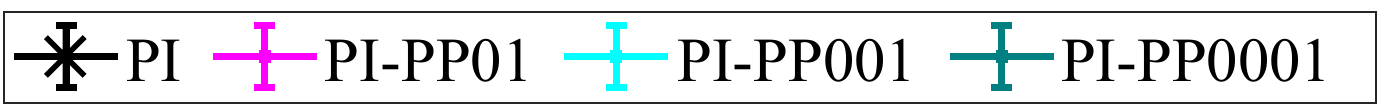}
\end{minipage}
\\\vspace{0.2em}
\begin{minipage}[c]{0.24\linewidth}\centering
	\includegraphics[width=1\linewidth]{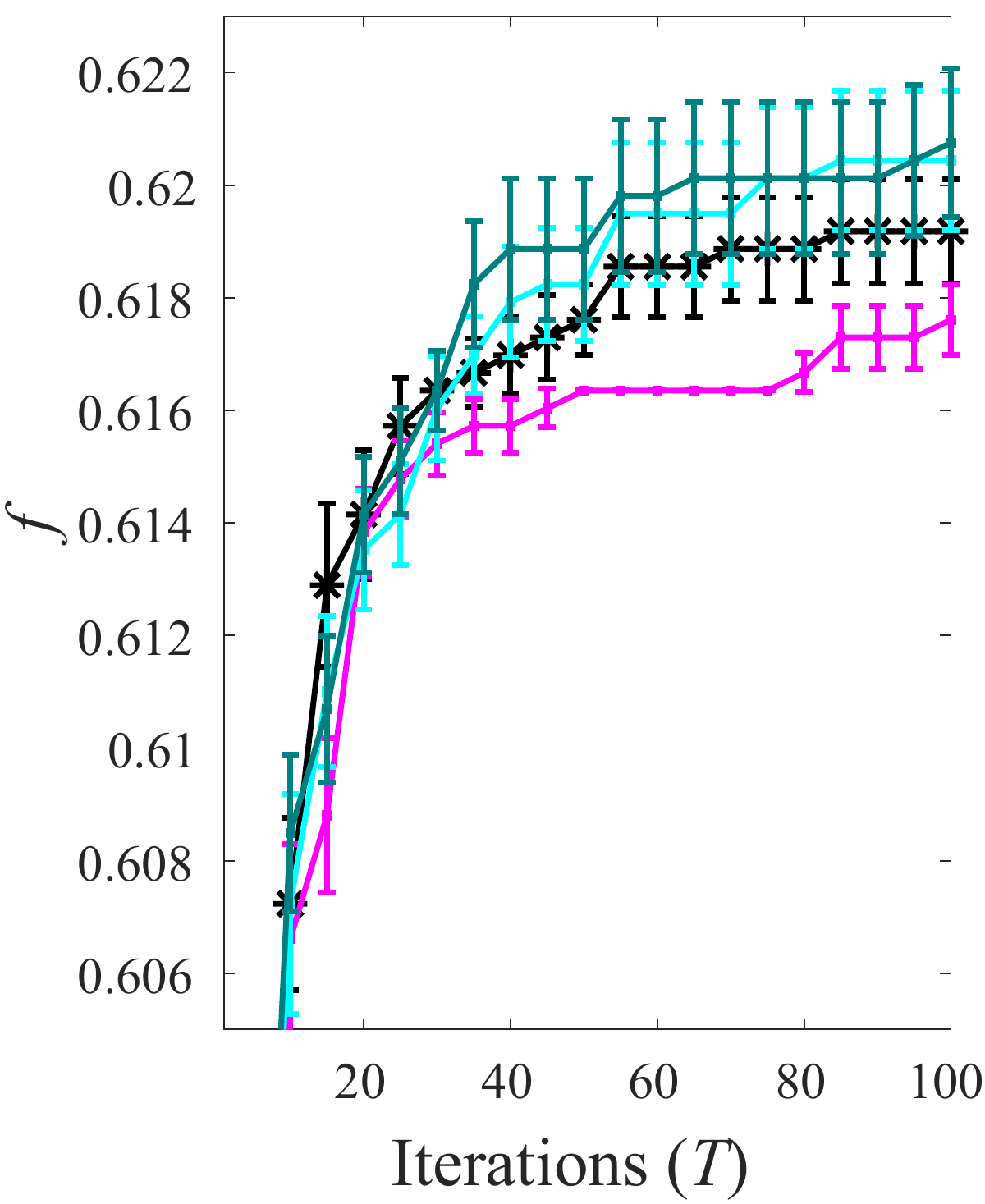}
\end{minipage}\ \
\begin{minipage}[c]{0.24\linewidth}\centering
	\includegraphics[width=1\linewidth]{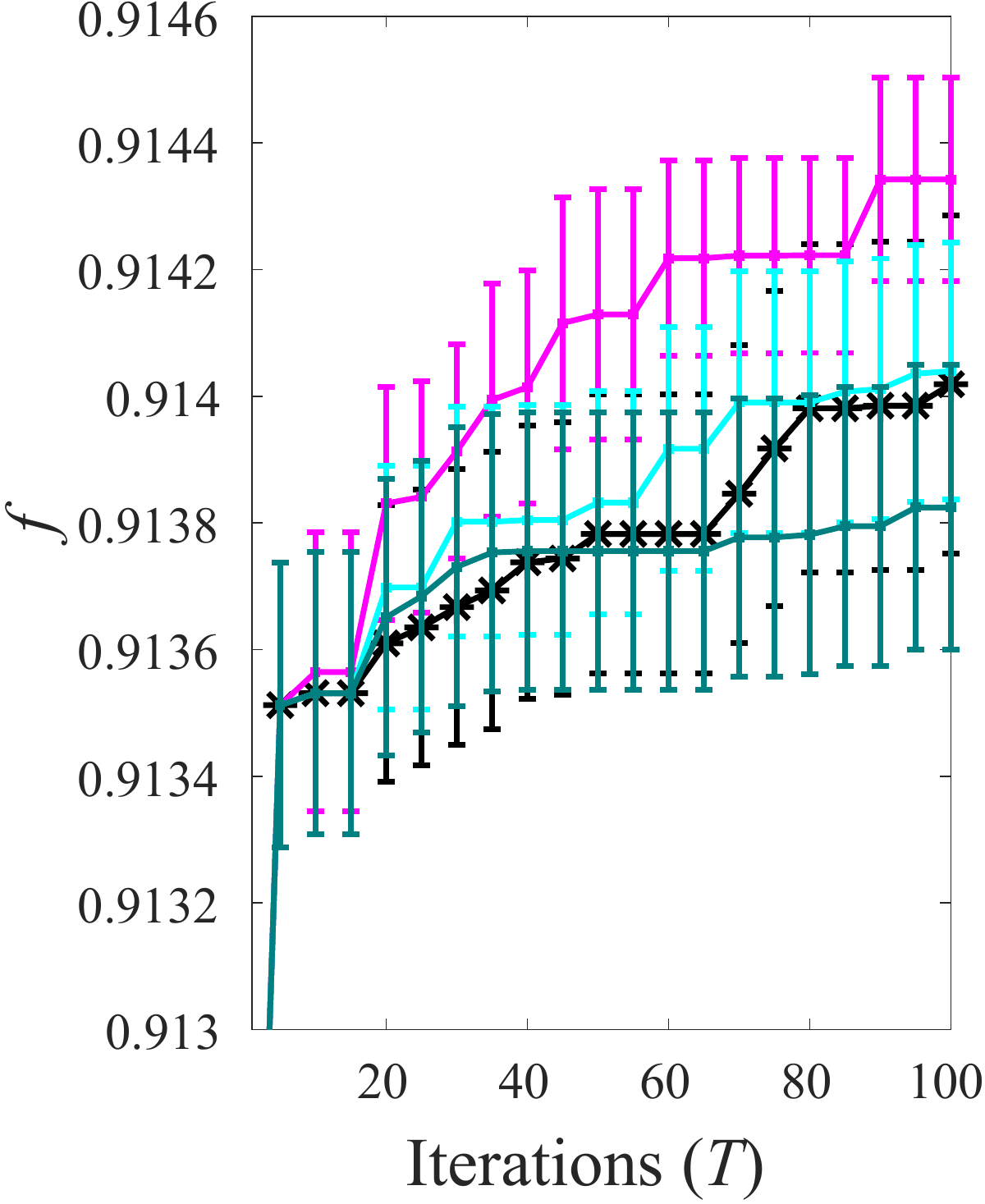}
\end{minipage}\ \
\begin{minipage}[c]{0.24\linewidth}\centering
	\includegraphics[width=1\linewidth]{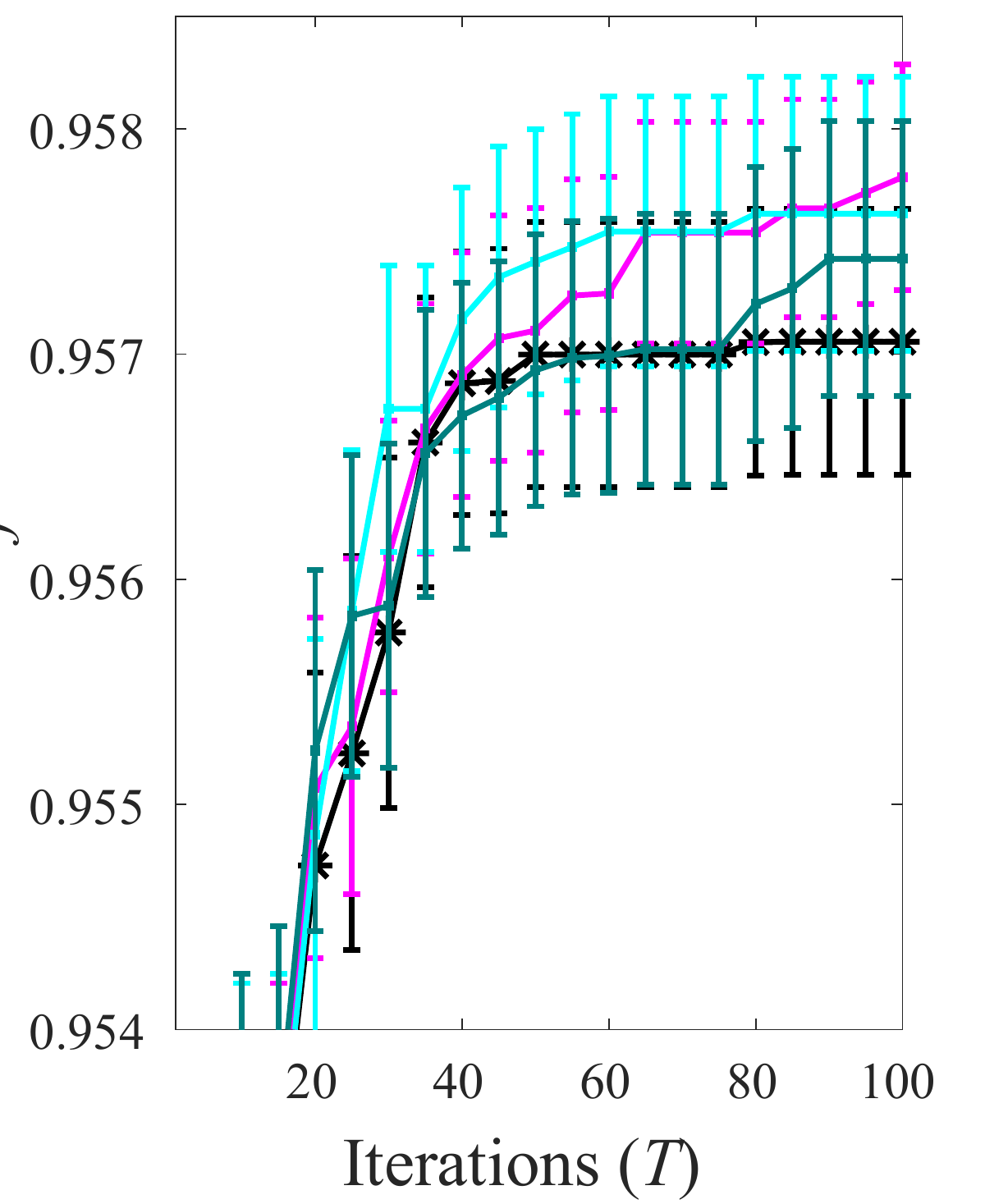}
\end{minipage}
\begin{minipage}[c]{0.24\linewidth}\centering
	\includegraphics[width=1\linewidth]{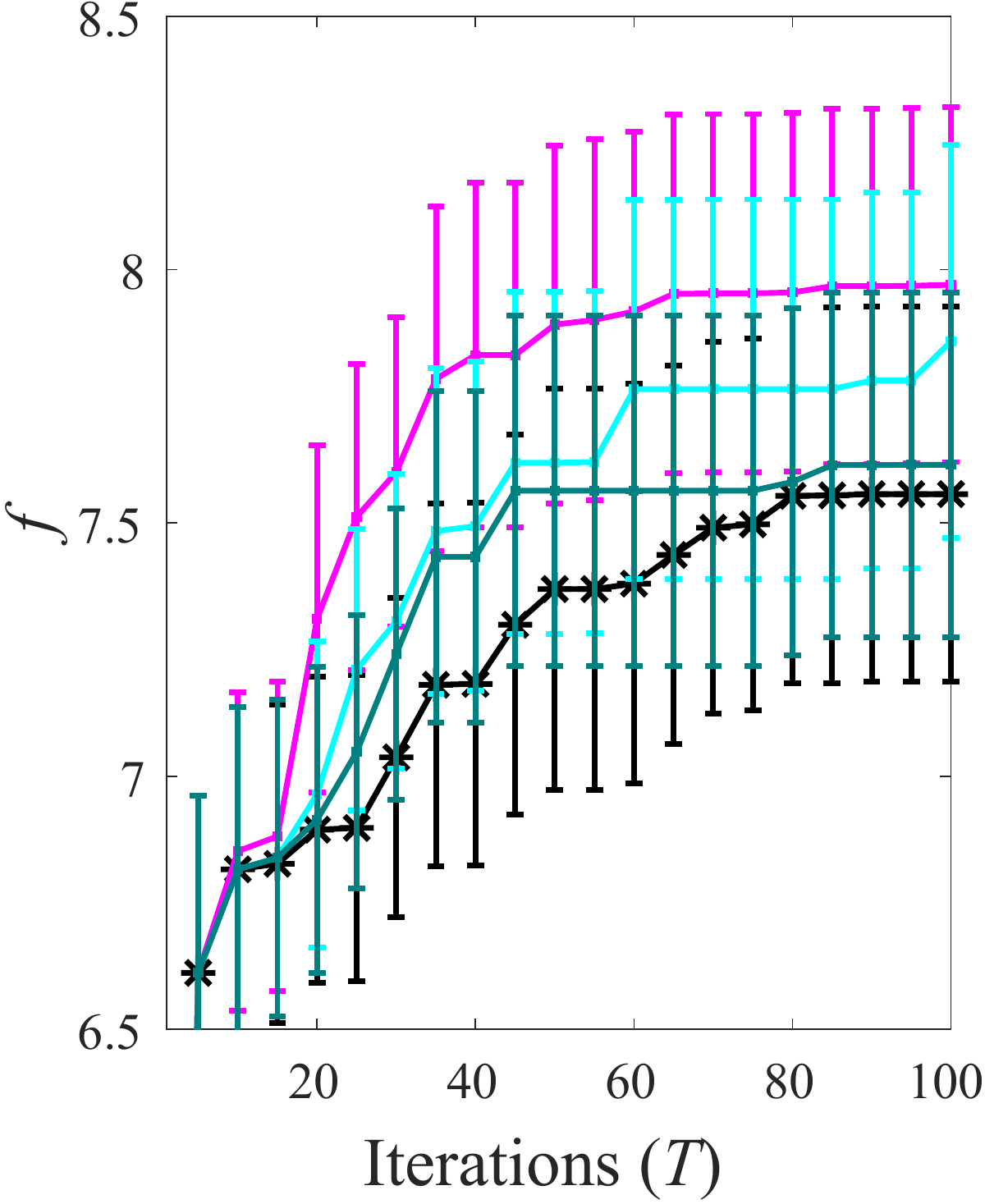}
\end{minipage}\ \
\\\vspace{0.2em}
\begin{minipage}[c]{0.24\linewidth}\centering
	\small(a) \textit{SVM$\_$wine}
\end{minipage}\ \
\begin{minipage}[c]{0.24\linewidth}\centering
	\small(b) \textit{NN$\_$wine}
\end{minipage}\ \
\begin{minipage}[c]{0.24\linewidth}\centering
	\small(d) \textit{NN$\_$cancer}
\end{minipage}
\begin{minipage}[c]{0.24\linewidth}\centering
	\small(c) \textit{NN$\_$housing}
\end{minipage}\ \
	\caption{The results (mean$\pm$(1/4)std.) of PI-PP and PI on real-world optimization problems. $f$: the larger, the better.}\label{fig-PI-real-world}
\end{figure*}

\begin{figure*}[t!]\centering
	\begin{minipage}[c]{0.45\linewidth}\centering
		\includegraphics[width=1\linewidth]{PI_legend}
	\end{minipage}
	\\\vspace{0.2em}
	\begin{minipage}[c]{0.24\linewidth}\centering
		\includegraphics[width=1\linewidth]{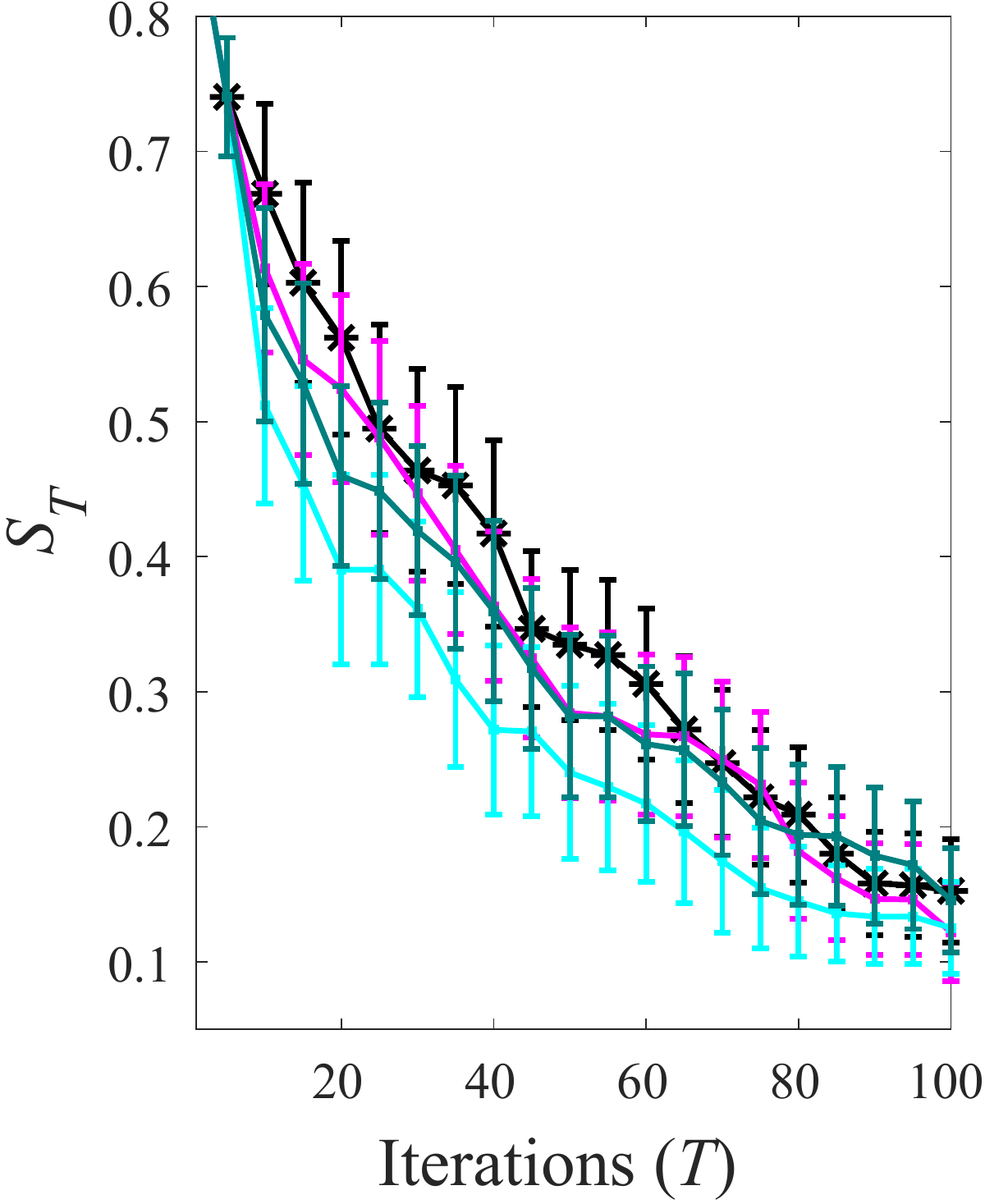}
	\end{minipage}\ \
	\begin{minipage}[c]{0.24\linewidth}\centering
		\includegraphics[width=1\linewidth]{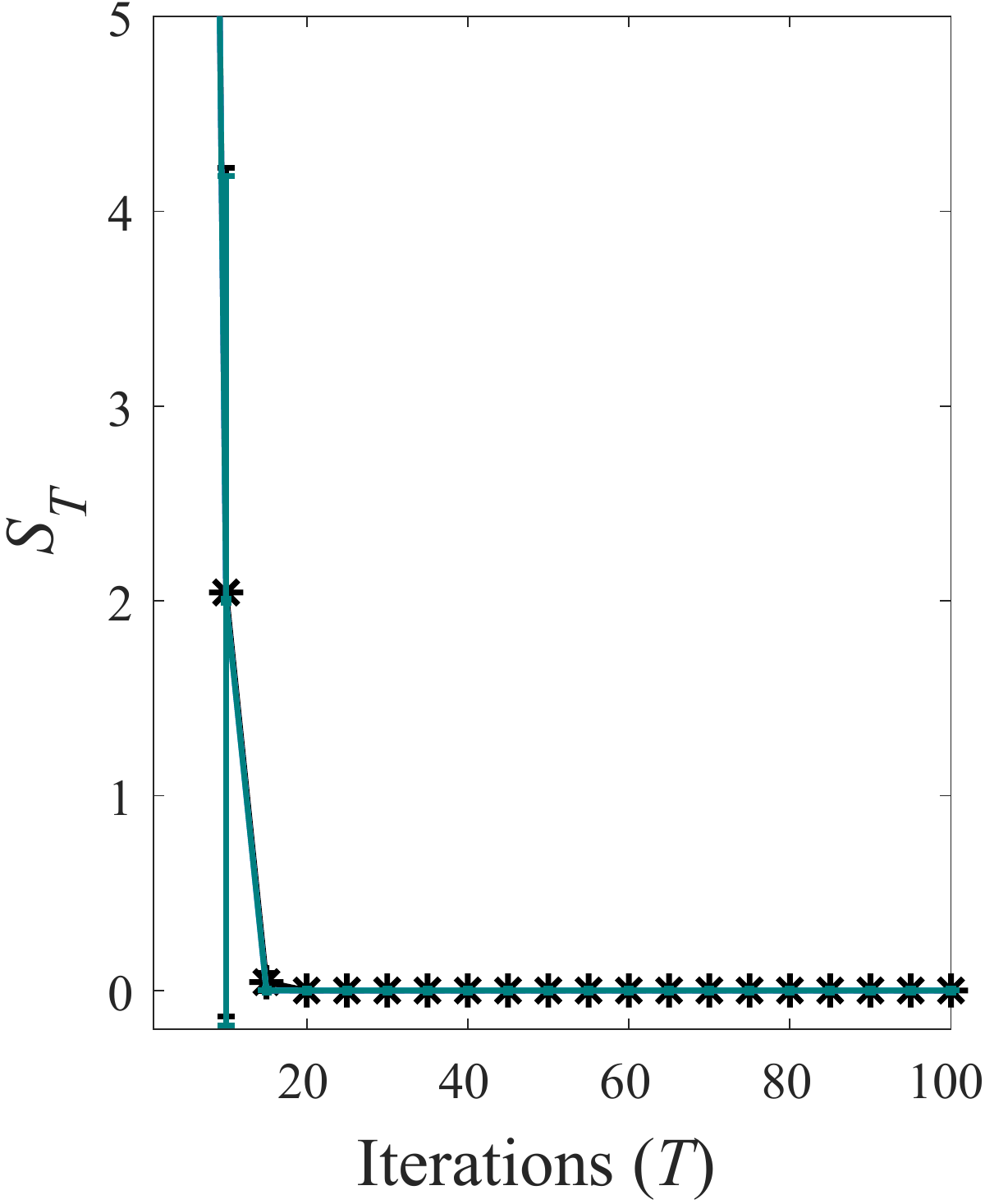}
	\end{minipage}\ \
	\begin{minipage}[c]{0.24\linewidth}\centering
		\includegraphics[width=1\linewidth]{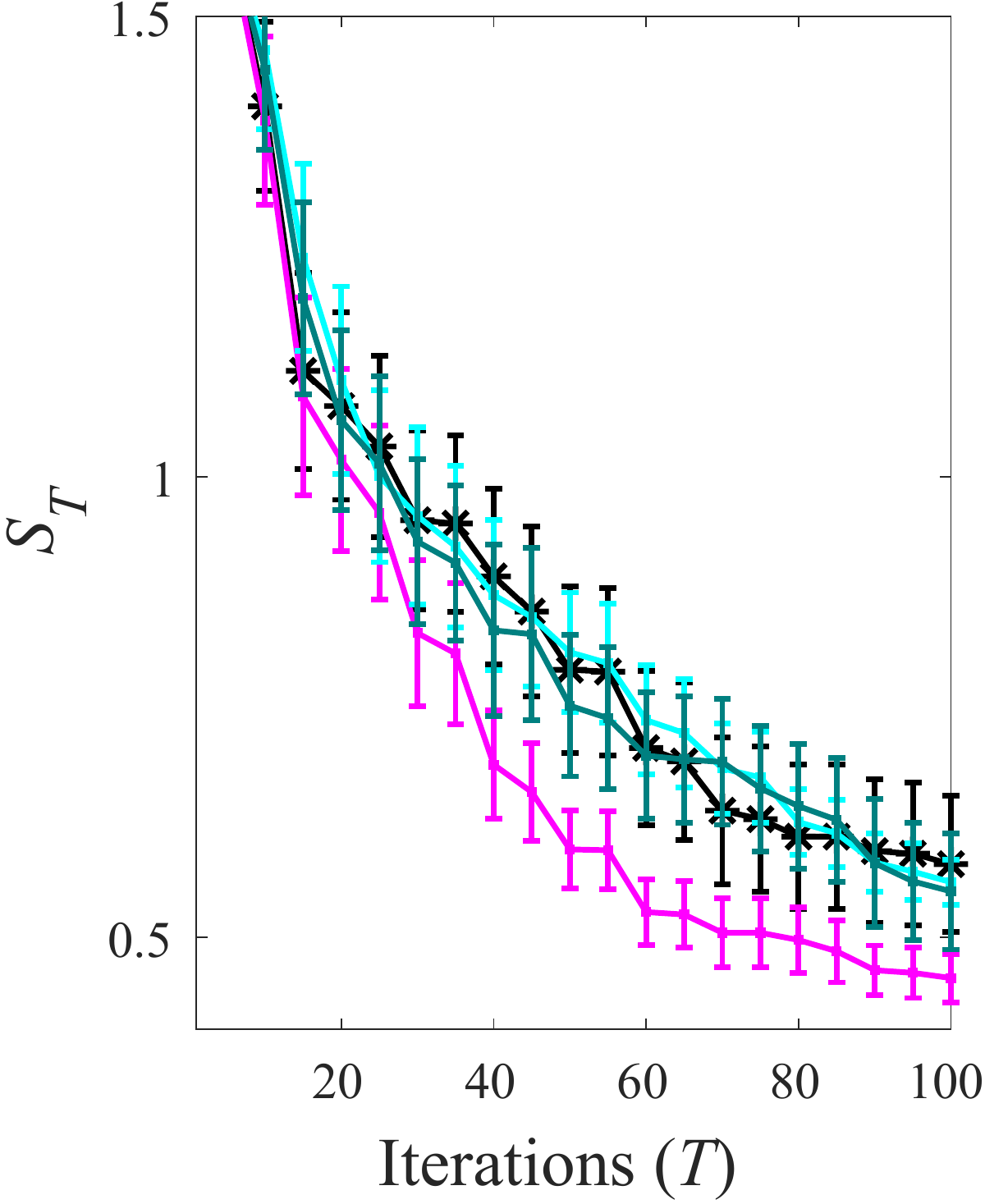}
	\end{minipage}
	\begin{minipage}[c]{0.24\linewidth}\centering
		\includegraphics[width=1\linewidth]{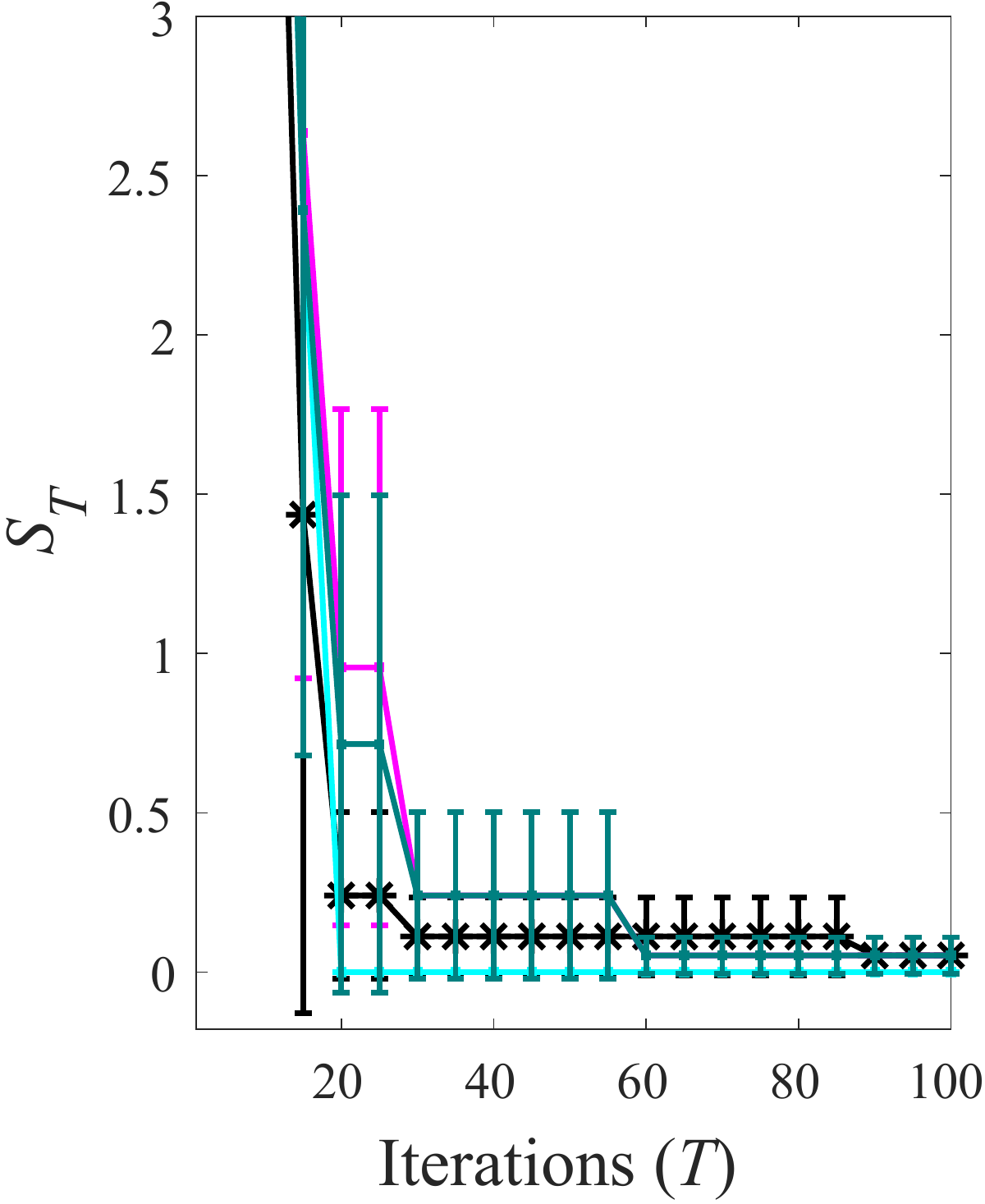}
	\end{minipage}
	\\\vspace{0.2em}
	\begin{minipage}[c]{0.24\linewidth}\centering
		\small(a) \textit{Dropwave}
	\end{minipage}\ \
	\begin{minipage}[c]{0.24\linewidth}\centering
		\small(b) \textit{Griewank}
	\end{minipage}\ \
	\begin{minipage}[c]{0.24\linewidth}\centering
		\small(c) \textit{Hart6}
	\end{minipage}
	\begin{minipage}[c]{0.24\linewidth}\centering
		\small(d) \textit{Rastrigin}
	\end{minipage}
	\caption{The results (mean$\pm$(1/4)std.) of PI-PP and PI on synthetic benchmark functions. $S_T$: the smaller, the better.}\label{fig-PI-benchmark}
\end{figure*}

\begin{figure*}[t!]\centering
\begin{minipage}[c]{0.45\linewidth}\centering
	\includegraphics[width=1\linewidth]{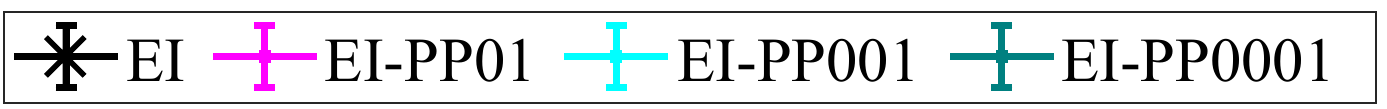}
\end{minipage}
\\\vspace{0.2em}
\begin{minipage}[c]{0.24\linewidth}\centering
	\includegraphics[width=1\linewidth]{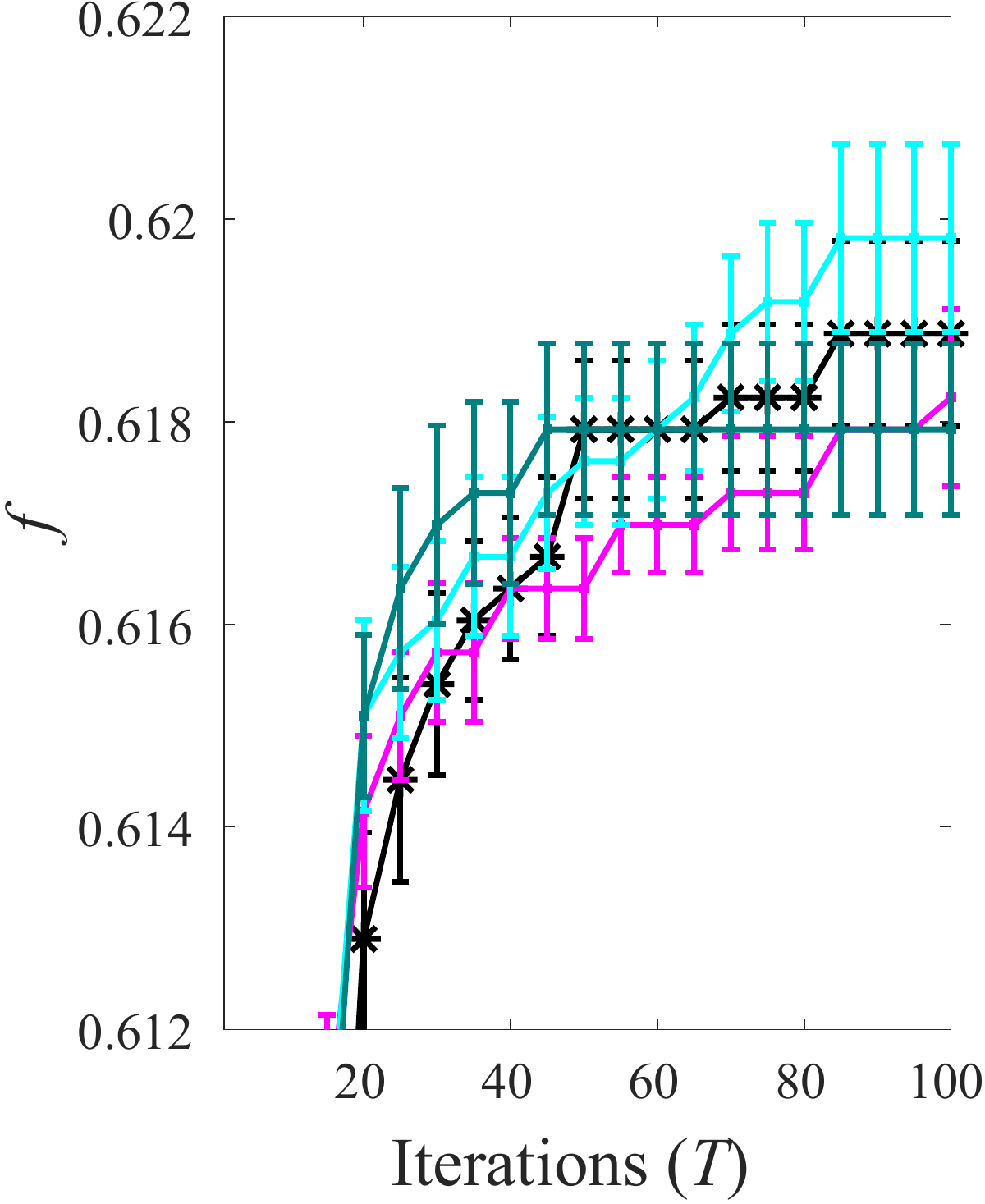}
\end{minipage}\ \
\begin{minipage}[c]{0.24\linewidth}\centering
	\includegraphics[width=1\linewidth]{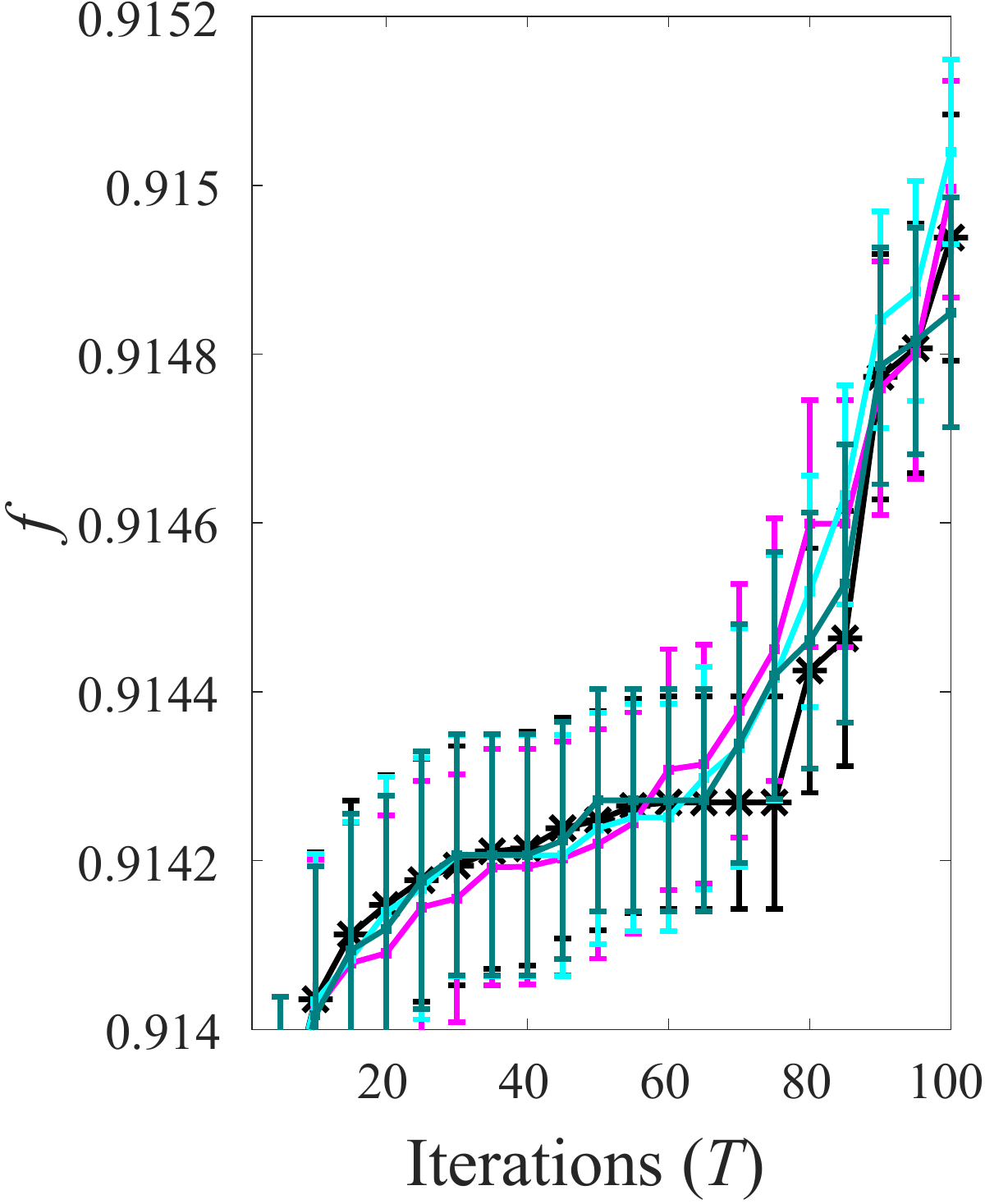}
\end{minipage}\ \
\begin{minipage}[c]{0.24\linewidth}\centering
	\includegraphics[width=1\linewidth]{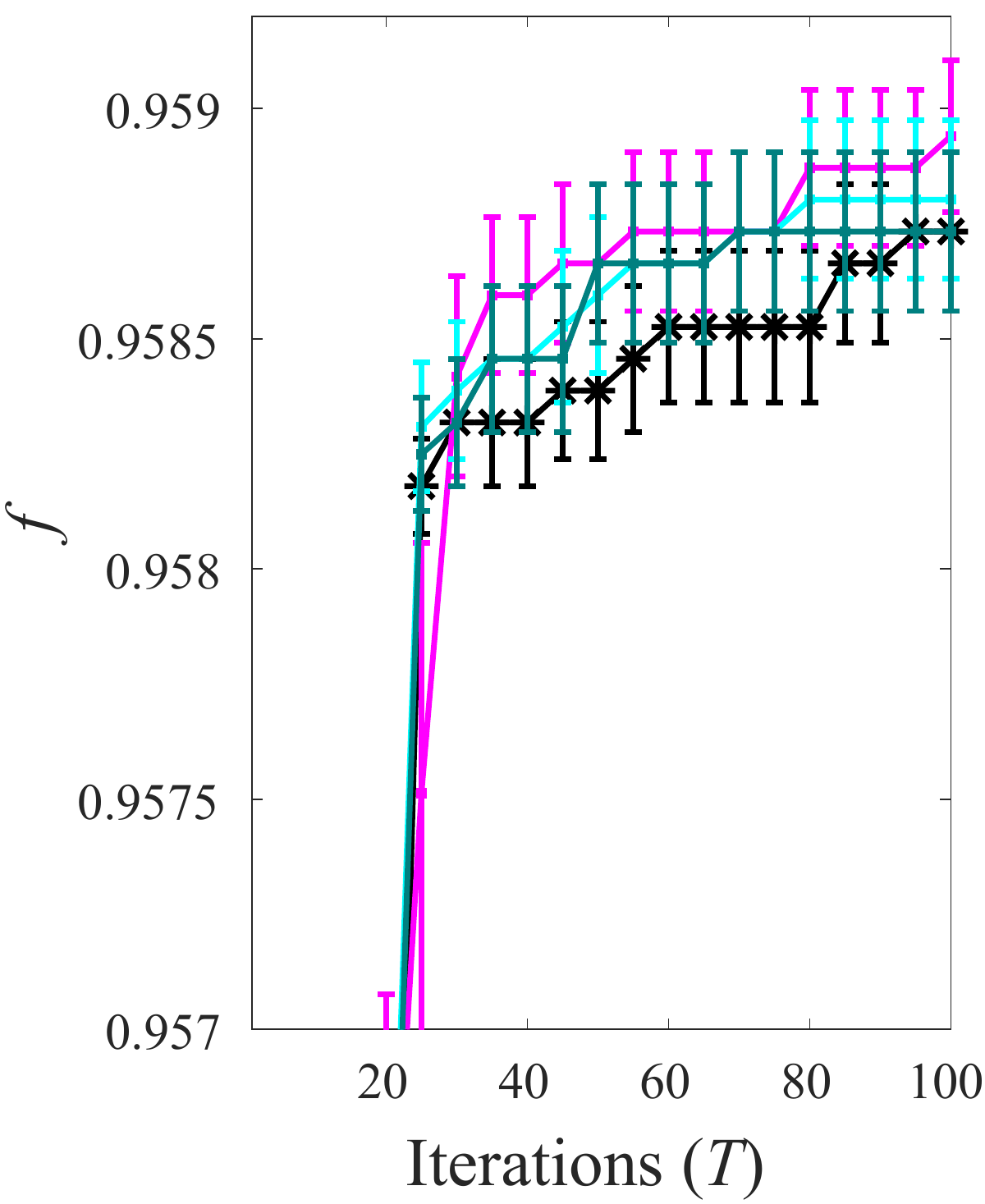}
\end{minipage}
\begin{minipage}[c]{0.24\linewidth}\centering
	\includegraphics[width=1\linewidth]{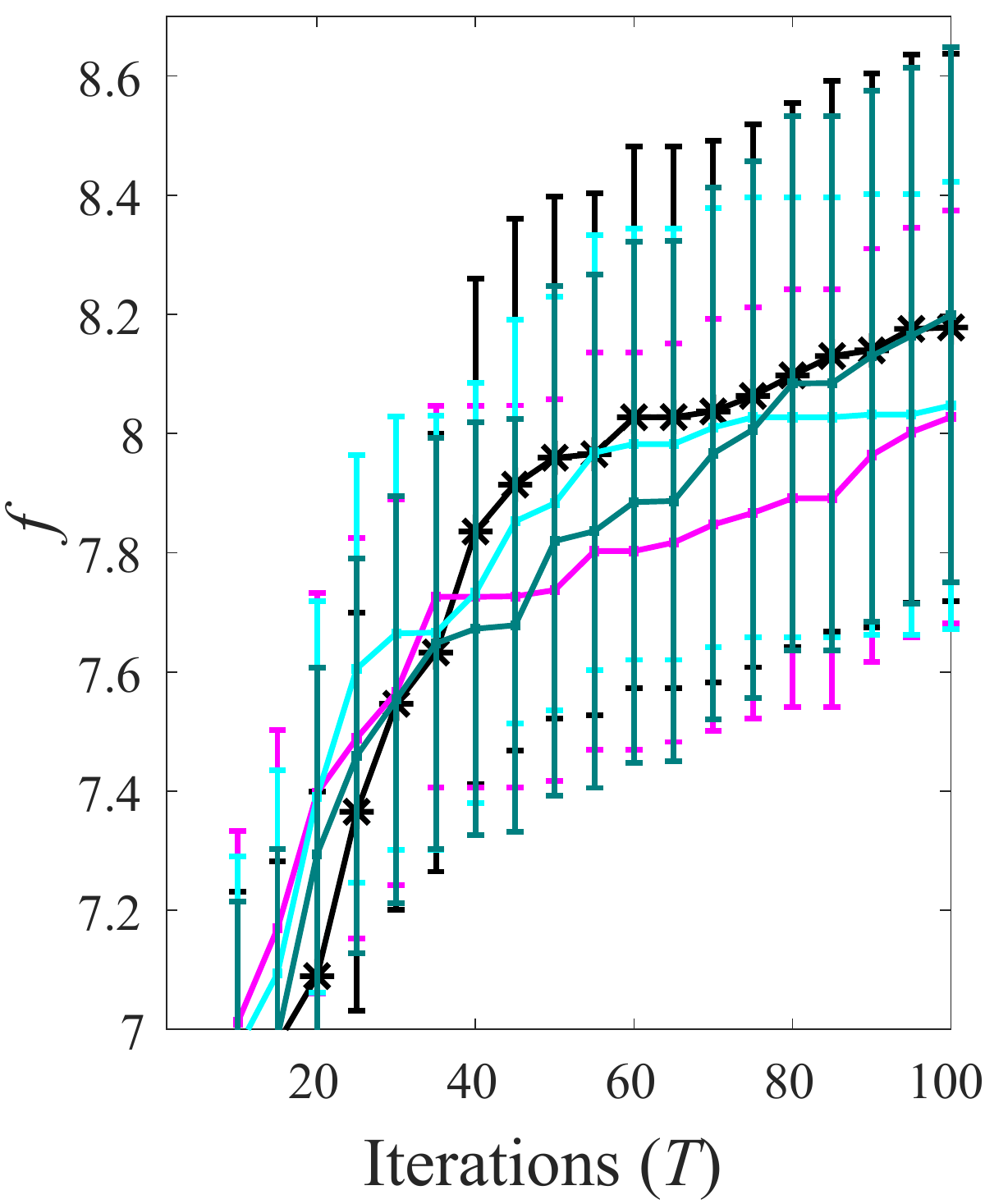}
\end{minipage}\ \
\\\vspace{0.2em}
\begin{minipage}[c]{0.24\linewidth}\centering
	\small(a) \textit{SVM$\_$wine}
\end{minipage}\ \
\begin{minipage}[c]{0.24\linewidth}\centering
	\small(b) \textit{NN$\_$wine}
\end{minipage}\ \
\begin{minipage}[c]{0.24\linewidth}\centering
	\small(c) \textit{NN$\_$cancer}
\end{minipage}
\begin{minipage}[c]{0.24\linewidth}\centering
	\small(d) \textit{NN$\_$housing}
\end{minipage}\ \
	\caption{The results (mean$\pm$(1/4)std.) of EI-PP and EI on real-world optimization problems. $f$: the larger, the better.}\label{fig-EI-real-world}
\end{figure*}

\begin{figure*}[t!]\centering
	\begin{minipage}[c]{0.45\linewidth}\centering
		\includegraphics[width=1\linewidth]{EI_legend}
	\end{minipage}
	\\\vspace{0.2em}
	\begin{minipage}[c]{0.24\linewidth}\centering
		\includegraphics[width=1\linewidth]{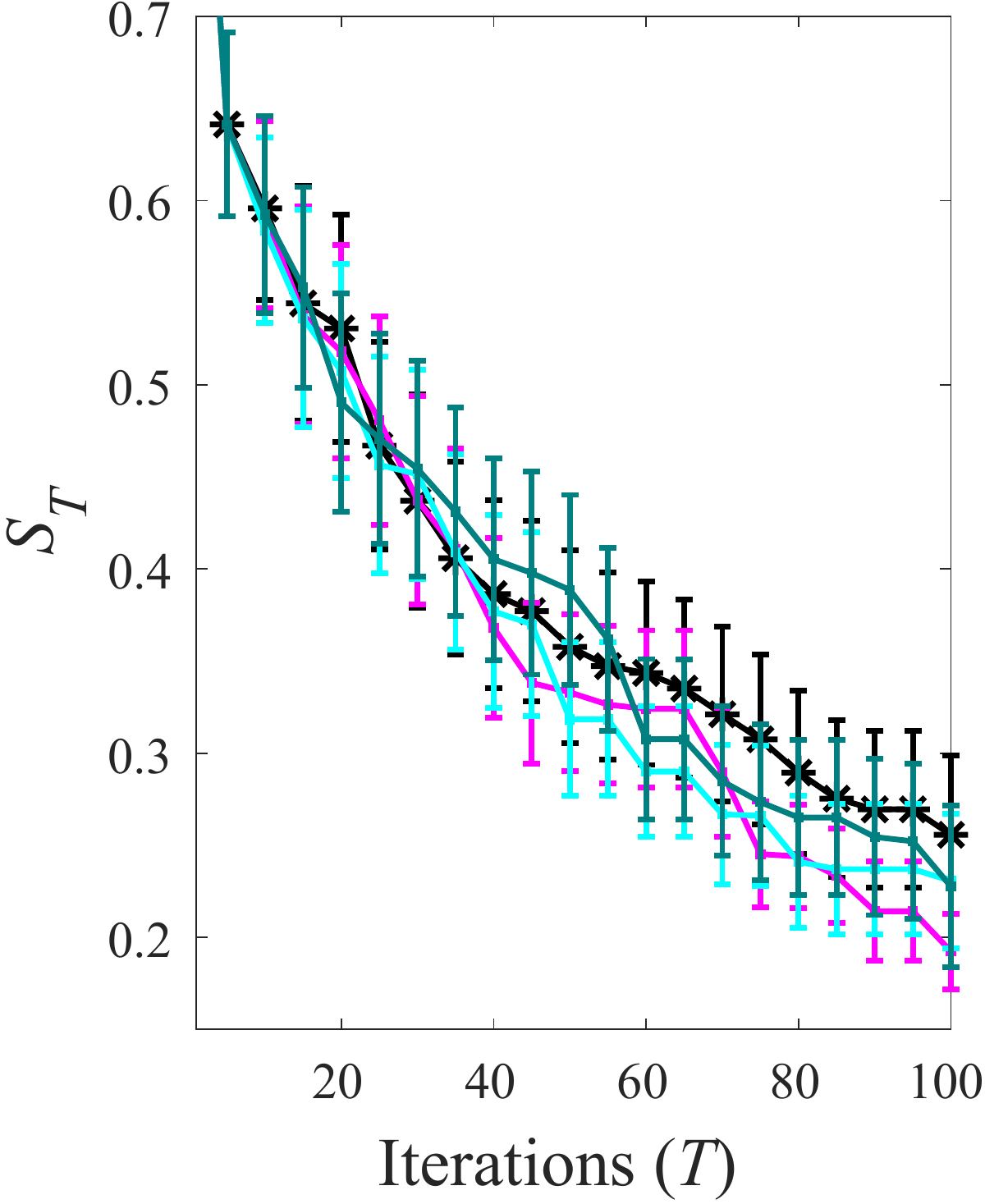}
	\end{minipage}\ \
	\begin{minipage}[c]{0.24\linewidth}\centering
		\includegraphics[width=1\linewidth]{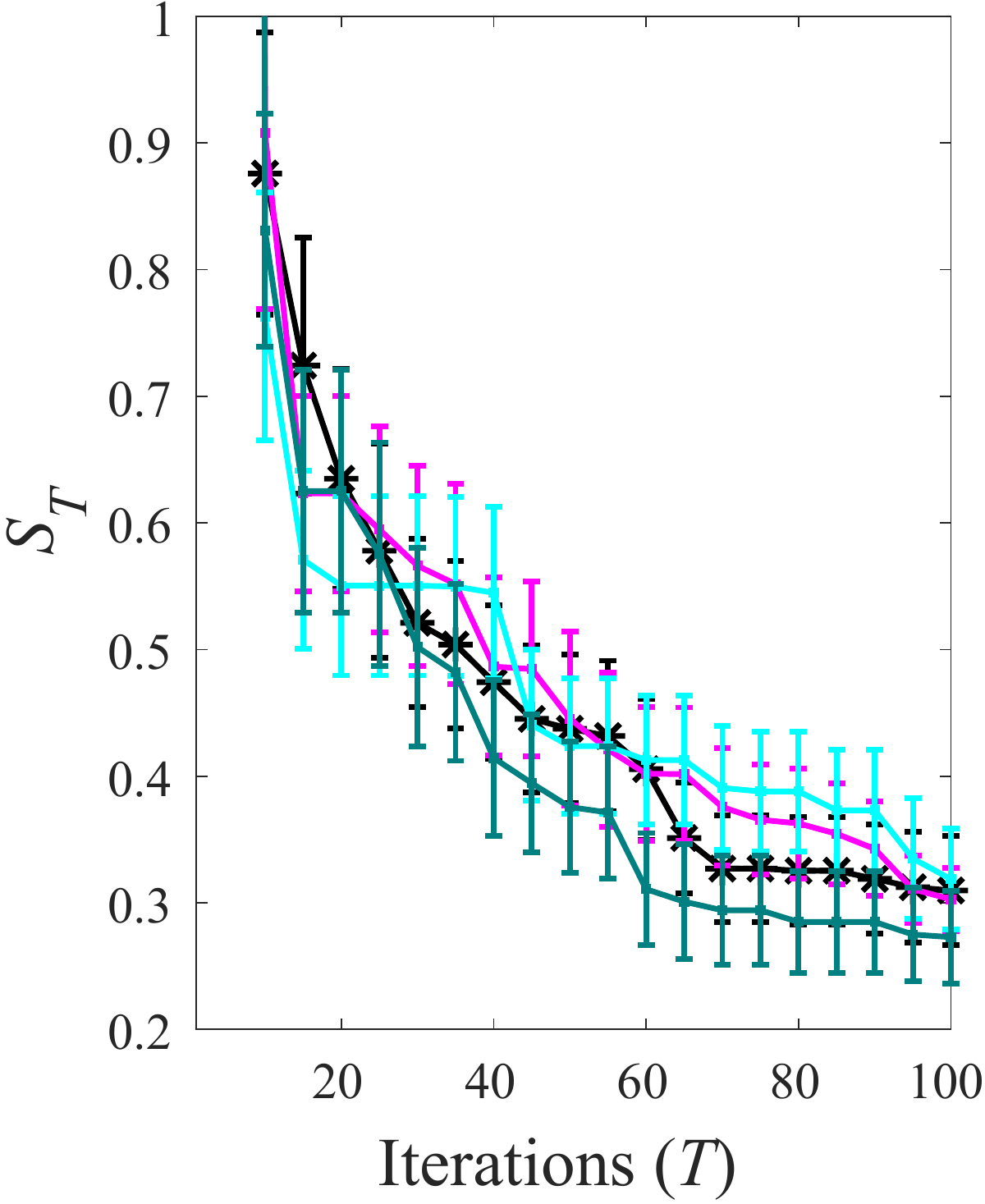}
	\end{minipage}\ \
	\begin{minipage}[c]{0.24\linewidth}\centering
		\includegraphics[width=1\linewidth]{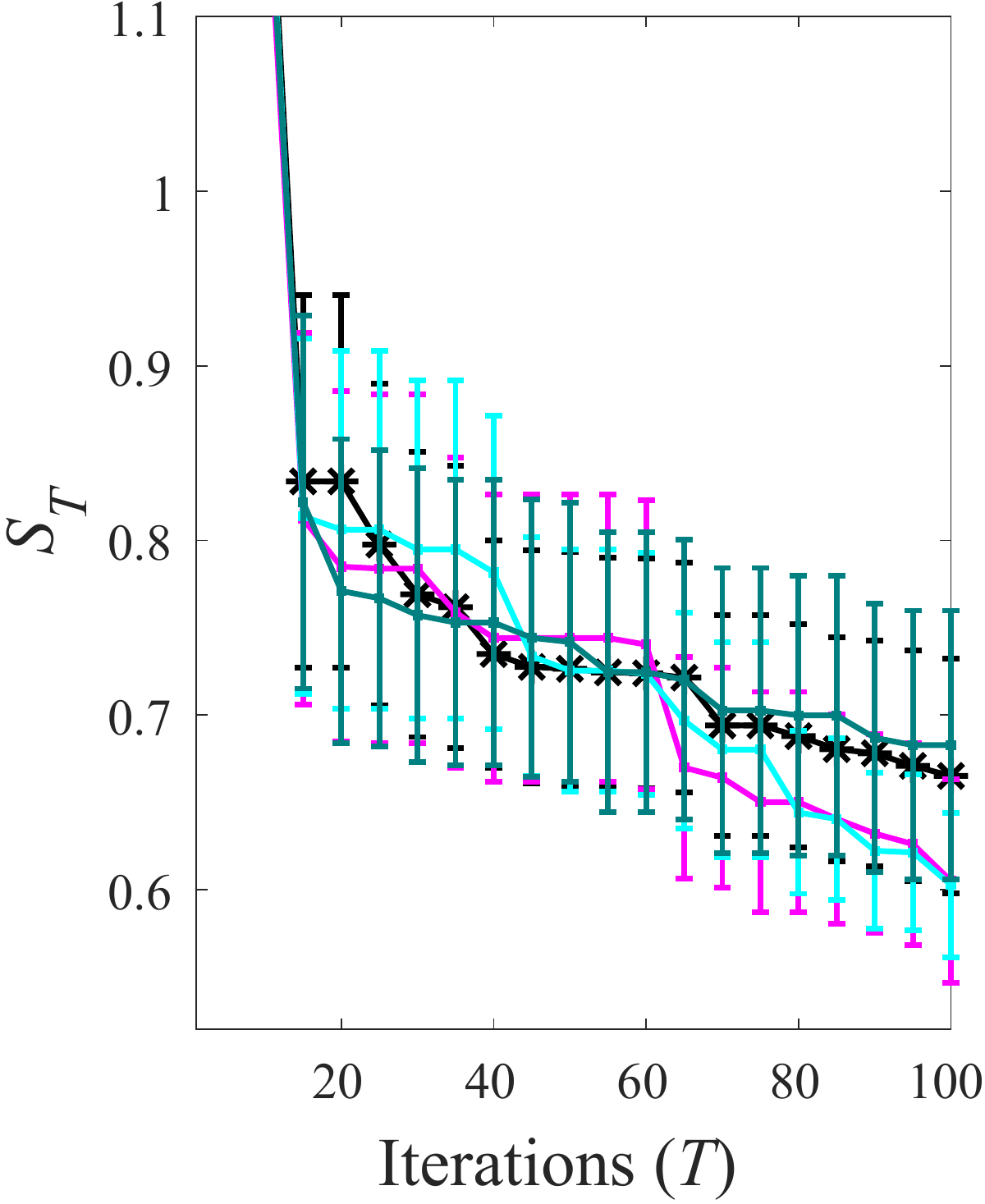}
	\end{minipage}
	\begin{minipage}[c]{0.24\linewidth}\centering
		\includegraphics[width=1\linewidth]{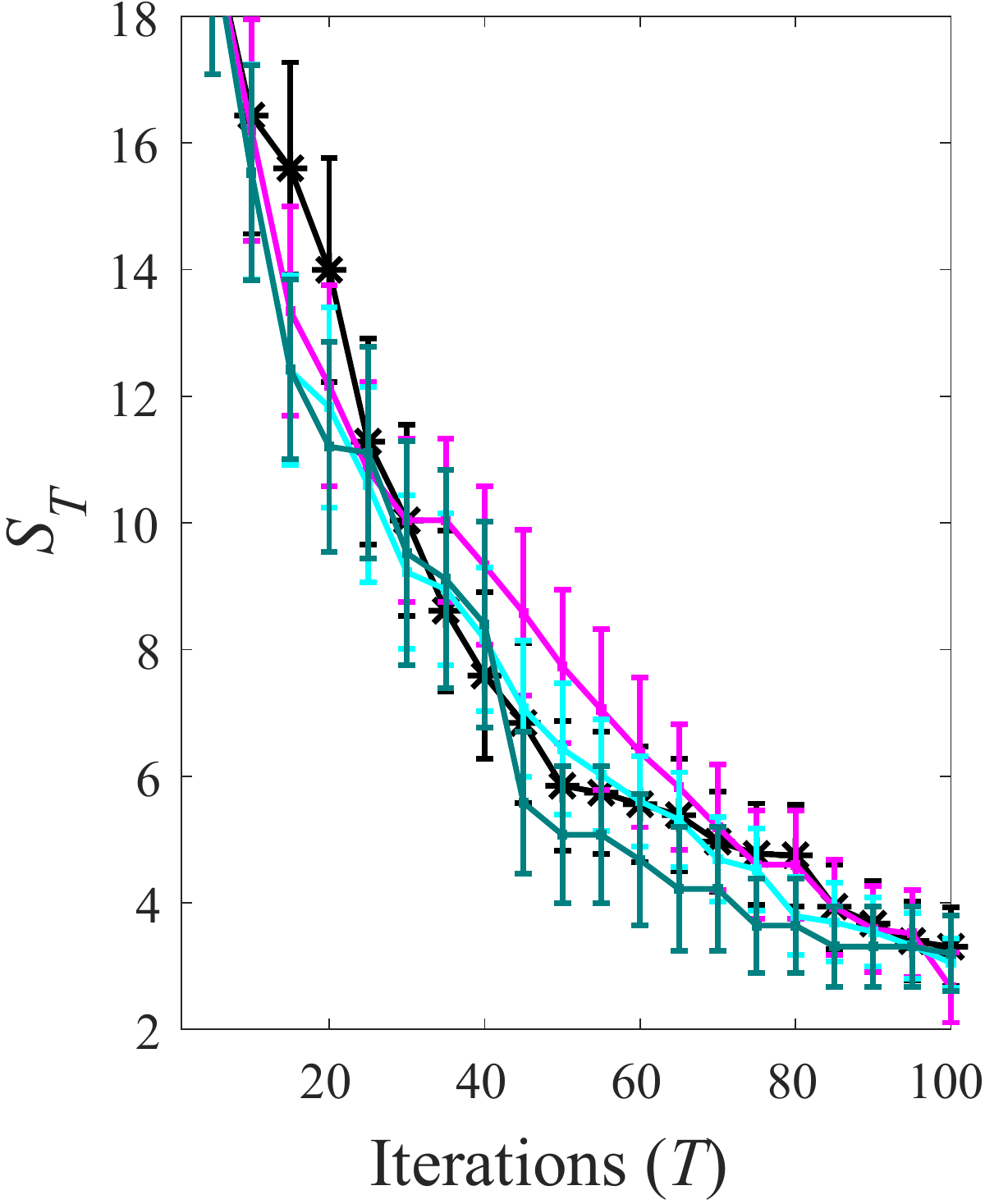}
	\end{minipage}
	\\\vspace{0.2em}
	\begin{minipage}[c]{0.24\linewidth}\centering
		\small(a) \textit{Dropwave}
	\end{minipage}\ \
	\begin{minipage}[c]{0.24\linewidth}\centering
		\small(b) \textit{Griewank}
	\end{minipage}\ \
	\begin{minipage}[c]{0.24\linewidth}\centering
		\small(c) \textit{Hart6}
	\end{minipage}
	\begin{minipage}[c]{0.24\linewidth}\centering
		\small(d) \textit{Rastrigin}
	\end{minipage}
	\caption{The results (mean$\pm$(1/4)std.) of EI-PP and EI on synthetic benchmark functions. $S_T$: the smaller, the better.}\label{fig-EI-benchmark}
\end{figure*}

To examine the robustness of BO-PP against kernels, we use the Gaussian kernel with hyper-parameters tuned by MLE. We compare UCB-PP with UCB on real-world problems, and the results in Figure~\ref{fig-UCB-SE-real-world} show that UCB-PP can always be better except UCB-PP01 and UCB-PP001 on \textit{SVM$\_$wine}.

\begin{figure*}[t!]\centering
\begin{minipage}[c]{0.5\linewidth}\centering
	\includegraphics[width=1\linewidth]{UCB_legend}
\end{minipage}
\\\vspace{0.2em}
\begin{minipage}[c]{0.24\linewidth}\centering
	\includegraphics[width=1\linewidth]{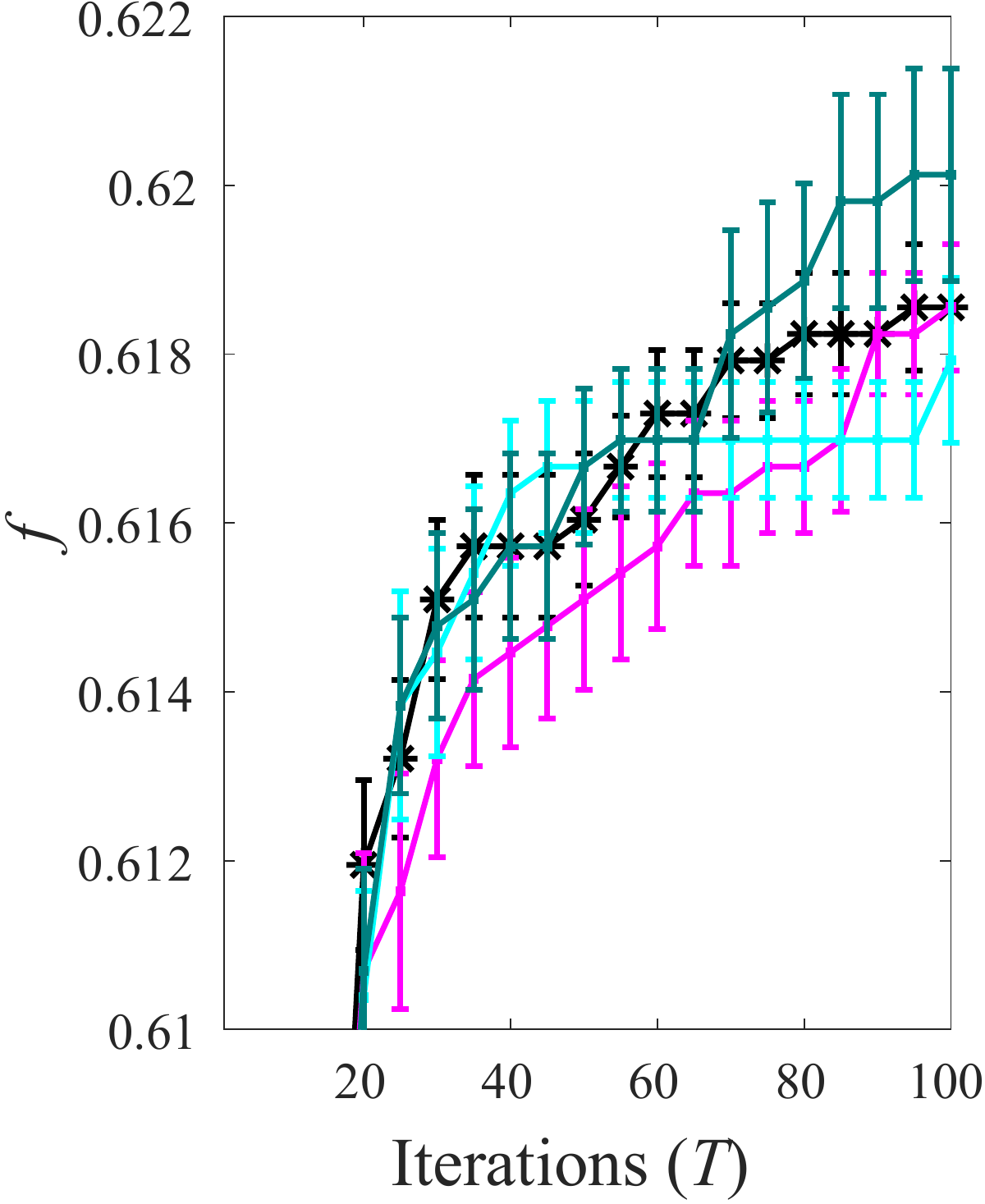}
\end{minipage}\ \
\begin{minipage}[c]{0.24\linewidth}\centering
	\includegraphics[width=1\linewidth]{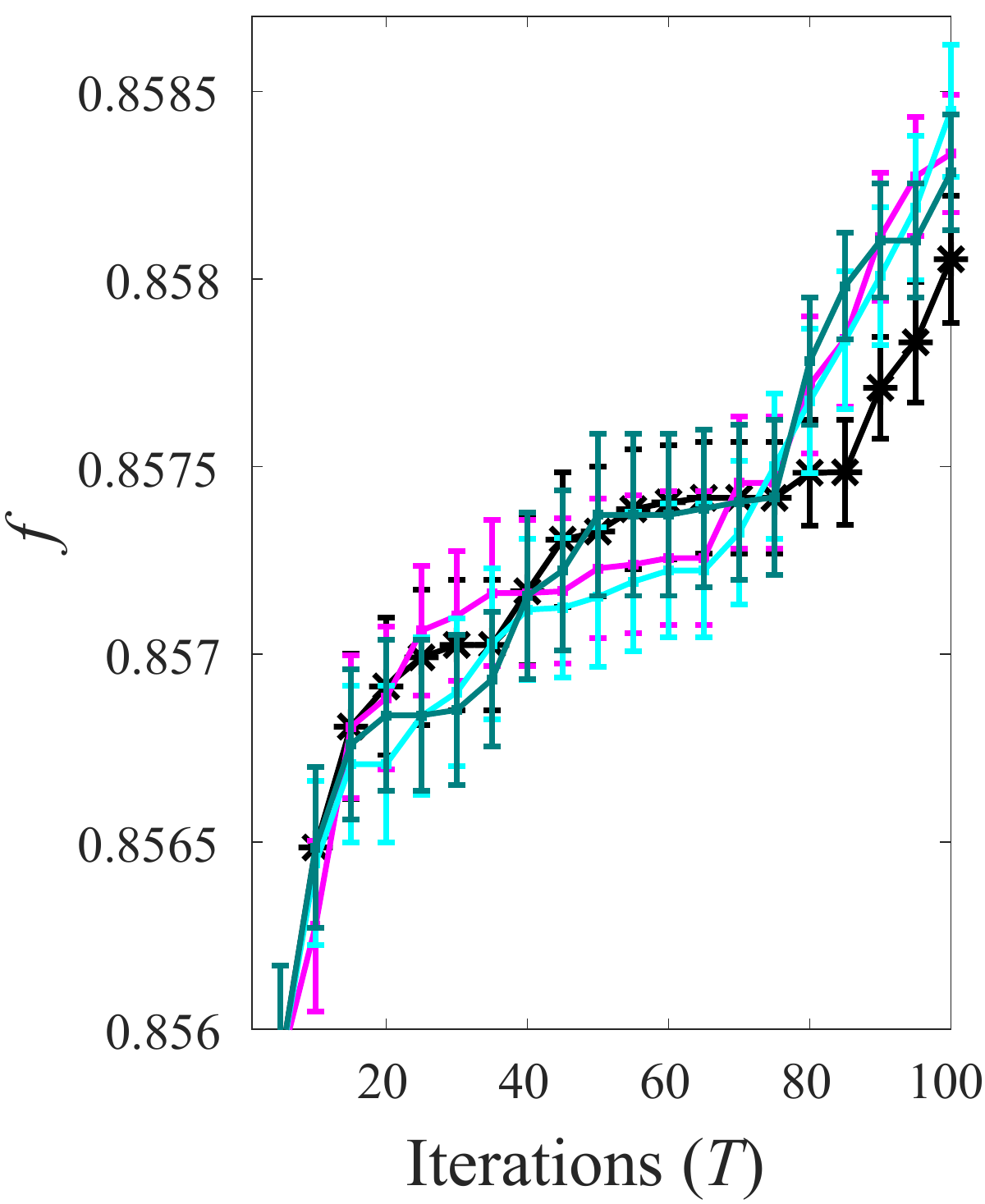}
\end{minipage}\ \
\begin{minipage}[c]{0.24\linewidth}\centering
	\includegraphics[width=1\linewidth]{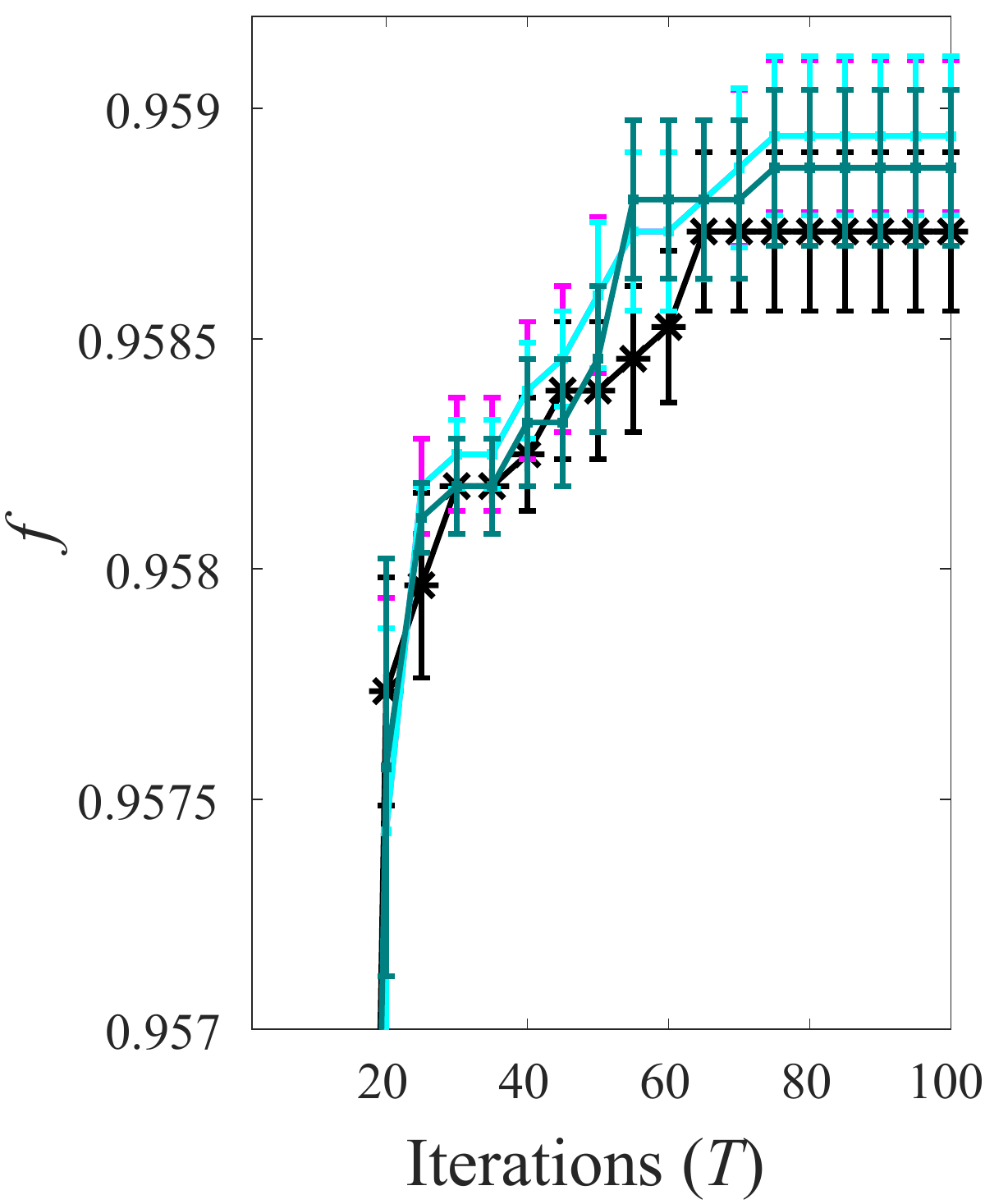}
\end{minipage}
\begin{minipage}[c]{0.24\linewidth}\centering
	\includegraphics[width=1\linewidth]{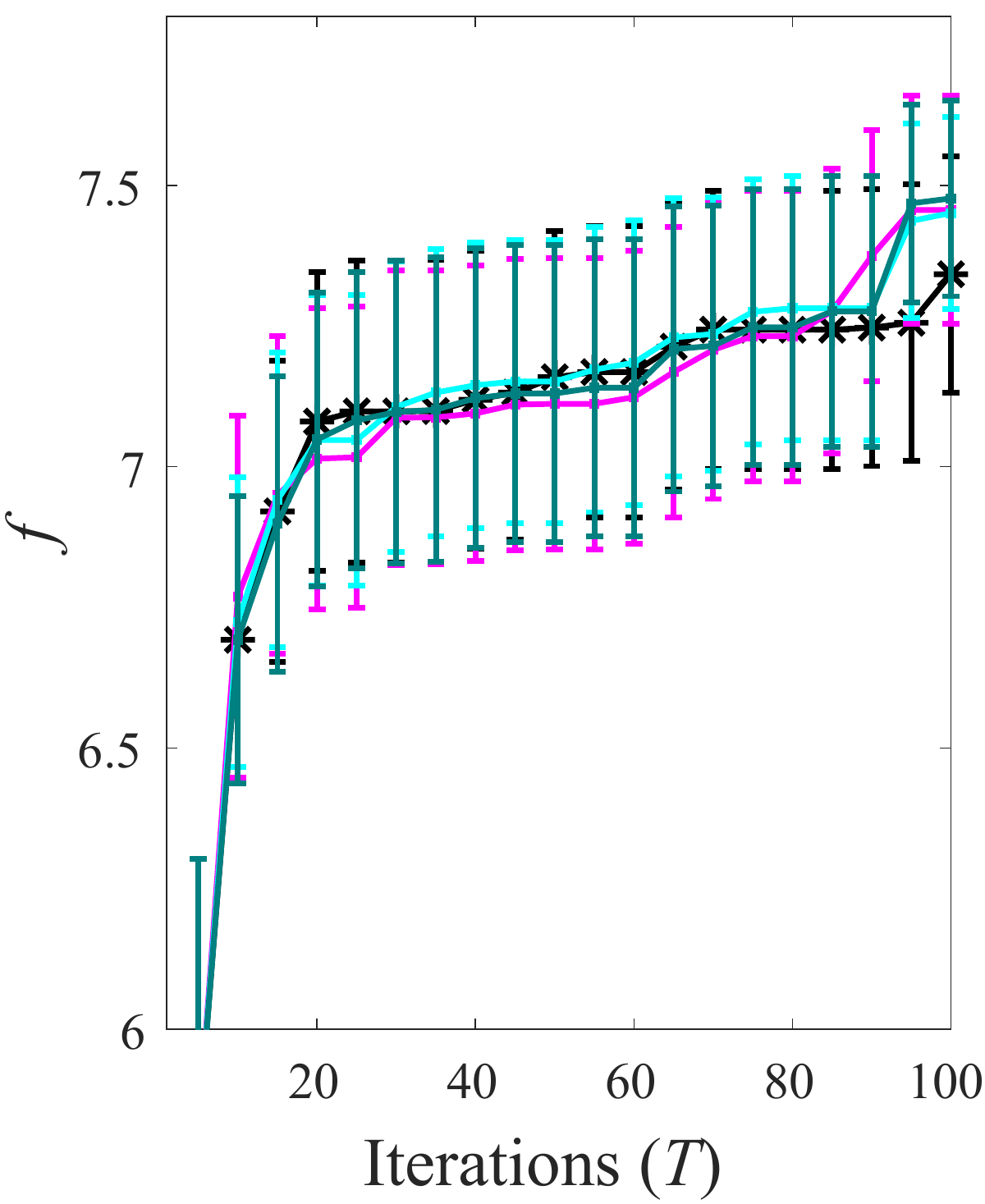}
\end{minipage}\ \
\\\vspace{0.2em}
\begin{minipage}[c]{0.24\linewidth}\centering
	\small(a) \textit{SVM$\_$wine}
\end{minipage}\ \
\begin{minipage}[c]{0.24\linewidth}\centering
	\small(b) \textit{NN$\_$wine}
\end{minipage}\ \
\begin{minipage}[c]{0.24\linewidth}\centering
	\small(c) \textit{NN$\_$cancer}
\end{minipage}
\begin{minipage}[c]{0.24\linewidth}\centering
	\small(d) \textit{NN$\_$housing}
\end{minipage}
\caption{The results (mean$\pm$(1/4)std.) of UCB-PP and UCB with the Gaussian kernel on real-world optimization problems. $f$: the larger, the better.}\label{fig-UCB-SE-real-world}
\end{figure*}

\section{Conclusion}

In this paper, we propose a general framework BO-PP by generating pseudo-points to improve the GP model of BO. BO-PP can be implemented with any acquisition function. Equipped with UCB, we prove that the cumulative regret of BO-PP can be well bounded. Experiments with UCB, PI and EI on synthetic as well as real-world optimization problems show the excellent performance of BO-PP. It is expected that the generation of pseudo-points can be helpful for more BO algorithms.

\bibliographystyle{ACM-Reference-Format}
\bibliography{telo-bopp}

\end{document}